\newcommand{\removelatexerror}{\let\@latex@error\@gobble}
\newtheorem{proposition}{Proposition}
\newtheorem{remark}{Remark}
\theoremstyle{definition}
\newtheorem{definition}{Definition}
\newtheorem{example}{Example}
\newcommand{\Cube}{\small{\textegoth{C}}}
\begin{document}
%\title{ACM: Accelerated Consensus Maximization for Geometric Vision By A Unified Approach}
\title{Accelerating Globally Optimal Consensus Maximization in Geometric Vision}

% Authors 
\author{Xinyue~Zhang,~\IEEEmembership{Student Member,~IEEE,}
        Liangzu~Peng,
        Wanting~Xu,~\IEEEmembership{Student Member,~IEEE,}
        and~Laurent~Kneip,~\IEEEmembership{Senior~Member,~IEEE}% <-this % stops a space
\IEEEcompsocitemizethanks{\IEEEcompsocthanksitem X.~Zhang, W.~Xu and L.~Kneip are with the Mobile Perception Lab, ShanghaiTech University. L.~Kneip is also with the Shanghai Engineering Research Center of Intelligent Vision and Imaging. \protect\\
% note need leading \protect in front of \\ to get a newline within \thanks as
% \\ is fragile and will error, could use \hfil\break instead.
E-mails: see https://mpl.sist.shanghaitech.edu.cn/contact.html
}% <-this % stops an unwanted space
% \thanks{Manuscript received April XX, 2023; revised August XX, 2023.}
\thanks{© 2024 IEEE.  Personal use of this material is permitted.  Permission from IEEE must be obtained for all other uses, in any current or future media, including reprinting/republishing this material for advertising or promotional purposes, creating new collective works, 
for resale or redistribution to servers or lists, or reuse of any copyrighted component of this work in other works.}
}

% The paper headers
\markboth{ }%
{Zhang \MakeLowercase{\textit{et al.}}: ACM: Accelerated Consensus Maximization}

% The abstract
\IEEEtitleabstractindextext{%
\begin{abstract}
  Branch-and-bound-based consensus maximization stands out due to its important ability of retrieving the globally optimal solution to outlier-affected geometric problems. However, while the discovery of such solutions caries high scientific value, its application in practical scenarios is often prohibited by its computational complexity growing exponentially as a function of the dimensionality of the problem at hand. In this work, we convey a novel, general technique that allows us to branch over an $n-1$ dimensional space for an n-dimensional problem. The remaining degree of freedom can be solved globally optimally within each bound calculation by applying the efficient interval stabbing technique. While each individual bound derivation is harder to compute owing to the additional need for solving a sorting problem, the reduced number of intervals and tighter bounds in practice lead to a significant reduction in the overall number of required iterations. Besides an abstract introduction of the approach, we present applications to four fundamental geometric computer vision problems: camera resectioning, relative camera pose estimation,  point set registration, and rotation and focal length estimation. Through our exhaustive tests, we demonstrate significant speed-up factors at times exceeding two orders of magnitude, thereby increasing the viability of globally optimal consensus maximizers in online application scenarios.
\end{abstract}

% Note that keywords are not normally used for peerreview papers.
\begin{IEEEkeywords}
Consensus Maximization, Branch and Bound, Interval Stabbing, Geometric Vision
\end{IEEEkeywords}}

% make the title area
\maketitle

\IEEEdisplaynontitleabstractindextext
% \IEEEdisplaynontitleabstractindextext has no effect when using
% compsoc or transmag under a non-conference mode.
\IEEEpeerreviewmaketitle

%%%%%%%%%%%%%%%%%%%%%%%%%%%%%%%%%%%%%%%%%%%%%%%%%%%%%%%%%%%%%%%%%%%%%%%%%%%%%%%%%%%%%%
%%%%%%%%%%%%%%%%%%%%%%%%%%%%%%%%%%%%%%%%%%%%%%%%%%%%%%%%%%%%%%%%%%%%%%%%%%%%%%%%%%%%%%
%%%%%%%%%%%%%%%%%%%%%%%%%%%%%%%%%%%%%%%%%%%%%%%%%%%%%%%%%%%%%%%%%%%%%%%%%%%%%%%%%%%%%%
%%%%%%%%%%%%%%%%%%%%%%%%%%%%%%%%%%%%%%%%%%%%%%%%%%%%%%%%%%%%%%%%%%%%%%%%%%%%%%%%%%%%%%
\IEEEraisesectionheading{\section{Introduction}\label{Sec:introduction}}

\IEEEPARstart{C}{onsensus} Maximization (CM) plays an important role in the field of Geometric Computer Vision. Given a set of pair-wise correspondences, the task is to find a solution that is geometrically consistent. Important examples include camera resectioning and epipolar geometry, where we are given input correspondences between 3D world points and 2D image points, or simply 2D image points drawn from two distinct images. However, rather than just having to find an optimal solution to absolute or relative camera pose by minimizing an integral error taken over all correspondences, problems in geometric computer vision are often complicated by the presence of outliers in the data. The majority of solvers employs least sum-of-squares objectives, which are easily disturbed by outliers. In the outlier affected case, the objective is therefore changed to the maximization of the number of correspondences for which the error falls below a pre-defined error threshold, the latter often being chosen as a function of the natural noise in the data. We refer to this as the Consensus Maximization (CM) problem.

Mathematically, one may formulate the CM problem as follows. Let $f:\mathcal{C}\times \mathcal{D} \to [0,\infty)$ be a residual function derived from the problem to be solved. $\mathcal{C} \subset \mathbb{R}^n$ denotes the constraint set that contains the desired solution variable $\mathbf{b}^*$,  and $\mathcal{D}$ is the data domain containing data samples $\mathbf{s}_i$. The entire, corrupted set of sample data is given by $\mathcal{S} = \{ \mathbf{s}_i\}_{i = 1}^M \subset \mathcal{D}$. With some threshold $\epsilon$, a sample $\mathbf{s}_i$ is called an inlier if $f(\mathbf{b}, \mathbf{s}_i)< \epsilon$, or an outlier otherwise. The general form of CM is then given by the optimization problem
\begin{equation}
    \begin{aligned}
        \max_{\mathbf{b}, \mathcal{I} } \; & \;|\mathcal{I}| \\
        \text{s.t.} \; & \; f(\mathbf{b}, \mathbf{s}_i) \leq \epsilon, \\
        & \; \forall \;\mathbf{s}_i \in \mathcal{I} \subseteq \mathcal{D} 
    \end{aligned}
    \label{Problem: CM}
\end{equation}
Here $| \cdot |$ denotes the number of elements in a set (cardinality) and $\mathcal{I}$ can be viewed as a set of inliers with respect to $\mathbf{b}$.

Given the count-based nature of CM, the objective is generally discrete and can no longer be solved using traditional optimization approaches. Instead, there are two kinds of CM approaches that have been proposed over the years: Randomized and deterministic. Randomized consensus maximization employs iterative random sampling of small subsets to generate model hypotheses $\mathbf{b}$ from least-squares solvers. In each iteration, we then find all correspondences for which $f(\mathbf{b}, \mathbf{s}_i) \leq \epsilon$, and thus determine the cardinality of the inlier subset $\mathcal{I}$ that agrees with the calculated hypothesis. If an outlier-free subset is sampled, we tendentially find a good hypothesis and an inlier ratio that is close to the true inlier ratio. Termination criteria typically depend on the expected and current best inlier ratio. The method returns the solution $\mathbf{b}$ corresponding to the largest cardinality inlier subset found during the iterations. Simple as it sounds, randomized CM methods such as RANSAC~\cite{fischler1981random} have been a milestone for robust geometric model estimation, and they are in broad use today. However, Randomized CM suffers from the inability to guarantee global optimality within a finite number of iterations. The expected number of iterations for RANSAC to reach a predefined confidence level grows significantly as the outlier ratio increases.

This is where Deterministic CM comes into play~\cite{li2009consensus,chin_efficient_2015,cai2018deterministic}. Instead of seeking consensus via randomization, deterministic methods exhaustively search the whole solution space, thereby guaranteeing global optimality. Several paradigms for the global search exist, such as Branch and Bound (BnB)~\cite{li2009consensus}, $A^*$ tree search~\cite{chin_efficient_2015}, and bisectioning~\cite{cai2018deterministic}, to just name a few. The global search provides an accurate solution and is able to handle extreme outlier ratios. Though computationally demanding, deterministic CM therefore remains of crucial scientific value especially as a reference implementation to test alternative (e.g. randomized) CM methods in scenarios in which no ground truth solution is available (e.g. on real, outdoor data).

We address the computational efficiency of deterministic CM by introducing Accelerated deterministic Consensus Maximization (ACM), a novel method that can be widely applied to speed up BnB-based globally optimal CM without depending much on the specific nature of a solved problem. This contribution is important as globally optimal, deterministic methods quickly become computationally intractable as the dimensionality of the problem increases. The detailed contributions and paper structure are as follows.

\subsection{Contributions}

\begin{itemize}
    \item We propose ACM, a general and flexible technique for accelerating deterministic consensus maximization. The method proceeds by performing a 1-DoF reduction of the space over which BnB is branching. The remaining dimension is solved efficiently and globally optimally using interval stabbing.
    \item The bounds achieved via this technique are not only faster to compute, but also tighter.
    \item We apply ACM on 5 different problems covering scenarios from $1$ dimension to $4$ dimension and obtain a $2 \times$-$100 \times$ speed-up (and sometimes exceeding $300\times$ speed-up).
\end{itemize}

\subsection{Paper Structure}

The rest of this work is organised as follows: 
Section~\ref{Sec:RelatedWork} reviews important related literature on both randomized and deterministic consensus maximization methods as well as existing approaches to increase their computational efficiency.
Section~\ref{Sec:BnB} reviews the Branch and Bound method in computer vision and the key technique of interval stabbing.
Section~\ref{Sec:ACM-X} contains our core contribution: Accelerated Consensus Maximization. It introduces the main technique for 1-DoF dimensionality reduction, requirements for the approach to be applicable, and important properties such as computational complexity.
Next, Sections~\ref{Sec:1DProblem},~\ref{Sec:2DProblem},~\ref{Sec:3DProblem} and~\ref{sec:4D} present applications on the $1$-dimensional problem of IMU-supported camera localization, the $2$-dimensional problem of relative pose under a planar motion assumption, 
 the $3$-dimensional problems of Correspondence-based and Correspondence-less point set registration, 
and $4$-dimensional problem of rotation and focal Length estimation respectively. We conclude with a discussion in Section~\ref{Sec:Conclusion}.

\section{Related Work} \label{Sec:RelatedWork}

The randomized RANSAC method as proposed by Fischler and Bolles~\cite{fischler1981random} remains the most widely used consensus maximization algorithm til date. For a collection of several important geometric registration problems that have been embedded into RANSAC, the reader is kindly referred to the OpenGV framework by Kneip and Furgale~\cite{kneip2014opengv}. Though the focus of this work lies on deterministic CM, it is worth noting a few extensions to RANSAC that have been proposed over the years. Torr and Zissermann~\cite{torr00} propose MLESAC, a variation that sums up inlier likelihoods during the inlier verification stage rather than just a count based on a fixed inlier threshold. Nist\'er~\cite{nister03} introduces preemptive RANSAC, a variation that first generates many hypotheses and verifies them against a subset of the correspondences to eliminate wrong models. The remaining hypotheses are verified against a gradually larger subset of the correspondences while continuing to remove wrong models. Only very few hypotheses have to be verified against all data samples, and the strongest one survives. Chum et al.~\cite{chum03} propose LO-RANSAC, a variant that includes local optimization in each iteration, and thereby helps to validate the typical assumption that any outlier-free sample can lead to a model that agrees with all inliers. Later, Chum and Matas~\cite{chum05} propose PROSAC. Unlike RANSAC, samples are not treated equally but drawn from a progressively larger subset of the original correspondences, the latter being ordered by matching quality. Based on the assumption that higher matching scores lead to better correspondences, the algorithm potentially achieves orders of magnitude speed-up. More recently, Barath and Matas~\cite{barath2018graph} propose Graph-cut Ransac (GC-RANSAC). Similar to MLESAC, the method employs a probabilistic kernel to map distances onto the interval $[0,1]$ and sums up those values during the inlier verification stage. However, GC-Ransac additionally relies on the assumption of spatial proximity of inliers and outliers, and employs graph-cut for a globally optimal solution of the labelling problem.

While not addressing CM, it is worth listing early works using branch-and-bound for the globally optimal solution of geometric fitting problems as several important techniques for later deterministic CM approaches are gained from here. Olsson et al.~\cite{olsson06} and Hartley and Kahl~\cite{hartley07,hartley09} find geometrically globally optimal solutions to 2D-3D correspondence-based camera resectioning, a problem to which previous solvers exclusively employed algebraic error residuals. The solutions also go beyond previous globally optimal solvers as they are the first to enable guaranteed geometric optimality over the space of rotations. In particular, the method of Hartley and Kahl~\cite{hartley07,hartley09} is faster and introduces important bounding operations enabling branching over the space of rotations, a technique that proves valuable for many of the following spatial registration-based CM problems. By employing the $\ell_\infty$ norm, they furthermore propose the first geometrically globally optimal solution to the relative camera pose problem. Olsson et al.~\cite{olsson08} propose further branch-and-bound based geometrically globally optimal solutions to point, line, and plane based Euclidean 3D-3D registration as well as 3D-2D registration. The work introduces the technique of convex underestimators in combination with branch-and-bound, which later finds reuse in CM. The work claims that it simply needs convex sets to which points are registered, and thus hints at potential use in correspondence-less scenarios. However, in practice the convex sets consist of a single landmark, hence correspondence-free or outlier-affected scenarios are not yet addressed. Although slightly less related, it is worth mentioning that globally optimal solutions in the space of Euclidean transformations under an algebraic error criterion have later been derived using relaxations of Quadratically Constrained Quadratic Programs (QCQP)~\cite{briales18,zhao20}.

Back to CM, an interesting method able to digest a limited amount of outliers is presented by Kahl et al.~\cite{kahl2008l1}, who propose a global branch-and-bound based minimization of the L1-norm of the residual vector for triangulation and uncalibrated pose estimation. The branch-and-bound method remains the pre-dominant choice for deterministic variants. The seminal work of Li~\cite{li2009consensus} provides a general framework for solving problems that can be formulated using Direct Linear Transformation (e.g. line-fitting, essential matrix fitting). The work describes the problem as a bilinear program, and relaxes it to a convex underestimator in the form of a linear program by assuming bounds on the searched transformation. A globally optimal solution is then found by embedding the program into a branch-and-bound scheme. Later, Bazin et al.~\cite{bazin2012globally} introduce deterministic CM solutions to the problem of pure rotation estimation for panorama stitching, rotating lidar point set registration, and line clustering with vanishing point estimation. In their later work, they extend the pure rotation determination to the uncalibrated case~\cite{bazin2014}. An improved method for vanishing point estimation has recently been presented by Li et al.~\cite{li2020globally}. Yang et al.~\cite{yang2014} extend the DLT-based method of Li~\cite{li2009consensus} for robust, deterministic essential matrix fitting by branching over a minimal 5D essential matrix manifold parametrization. In their work, they reuse the rotation space branching method of Hartley and Kahl~\cite{hartley09}. The conceptually simpler robust correspondence-based point set registration is only introduced later~\cite{bustos2017guaranteed}. Speciale et al.~\cite{speciale2017consensus} introduce CM with linear inequality constraints.

More recently, we have seen a number of solutions for correspondence-based globally optimal vision-based relative pose estimation under a constrained motion model. Liu et al.~\cite{liu_globally_2021} address the case of known gravity, and Jiao et al.~\cite{jiao_deterministic_2021} utilize prior knowledge about the roll and pitch angles from a nearby reference pose. They furthermore decouple yaw and translation estimation, and employ a maximum clique estimator for the latter. Liu et al.~\cite{liu2022globally} propose solutions for planar ground vehicle motion, respectively. Jiao et al.~\cite{jiao2020globally} again employ translation invariant features and IMU readings in order to reduce the rotation search to a one-directional search. Above methods achieve a substantial speed-up by assuming prior knowledge and reducing the dimensionality of the branching space.

Deterministic CM has also been applied in situations in which data correspondences are not given upfront, but have to be found as a by-product of the optimization. An early seminal contribution is made by Breuel~\cite{breuel2003implementation}, who proposes solutions to globally optimal correspondence-less matching of points under a Euclidean transformation in the image plane. Later, by using inlier cardinality maximization and branching over 3D rotations, Yang et al.~\cite{yang2013,yang2016} solve globally optimally for pose and correspondence towards 3D point set registration. Their method---denoted Go-ICP---speeds up the estimation by adding local optimization to the lower bound calculation, thereby accelerating the pruning of branches inside BnB. Bustos et al.~\cite{parra2014fast} propose fast correspondence-less point set registration by creating possible match lists from stereo-graphic projections. Liu et al.~\cite{liu2018efficient} propose simplified globally-optimal point set registration by creating hypothetical match-lists from rotation invariant features. Similar highly efficient point-set registration methods have recently been proposed by Yang et al. \cite{yang2019polynomial,yang2020teaser}. Cai et al.~\cite{cai2019practical} address the globally optimal 4D point set registration problem given by using lidar scanners with built-in level compensation. A highly efficient method for 3D point-based rotation and correspondence is presented in \cite{peng2022arcs}, which is highly related to our method as it also makes use of interval stabbing in order to achieve a very substantial improvement in computational efficiency. Campbell et al.~\cite{campbell2017globally,campbell2020} finally propose a globally optimal solution to simultaneous camera pose and correspondence determination under a geometric error criterion, again branching over the space of rotations. Later, Campbell et al.~\cite{campbell2019} propose a related method in which a spherical 3D mixture-model describing the 3D spatial location of landmarks (e.g. objects) is aligned globally optimally with a 2D mixture-model describing the 2D image-based location of their projections (e.g. semantic segments). Hu and Kneip~\cite{hu2021} propose an interesting variation of the globally optimal correspondence-free point set registration problem in which they jointly solve for a symmetry plane, thereby enabling good performance in situations of extremely low overlap. Gao et al.~\cite{gao2020} again propose a method for globally optimal correspondence-free registration of points in the image plane under a constrained, non-holonomic transformation model. The method is successfully applied to a down-ward facing camera mounted on an Ackermann-steering vehicle for planar motion determination. Peng et al.~\cite{peng2020,peng2021} finally apply the globally optimal branch-and-bound technique to contrast maximization-based event camera motion estimation.

Whilst alternative techniques based on tree-search~\cite{chin_efficient_2015,cai_consensus_2019,cai_consensus_nodate} and bi-convex programming~\cite{cai2018deterministic,cai_consensus_nodate} exist, CM remains a hard problem~\cite{tat2020robust}. The present paper focuses on the predominant technique of branch-and-bound-based deterministic CM and the question how to increase its computational efficiency. However, in contrast to many existing solutions that employ problem-dependent techniques such as prior knowledge, algebraic manipulations, or invariant representations, our proposed method uses interval stabbing for a general speed-up of the original problem solution.

%%%%%%%%%%%%%%%%%%%%%%%%%%%%%%%%%%%%%%%%%%%%%%%%
%%%%%%%%%%%%%%%%%%%%%%%%%%%%%%%%%%%%%%%%%%%%%%%%
%%%%%%%%%%%%%%%%%%%%%%%%%%%%%%%%%%%%%%%%%%%%%%%%
%%%%%%%%%%%%%%%%%%%%%%%%%%%%%%%%%%%%%%%%%%%%%%%%
\section{Preliminaries}\label{Sec:BnB}

Before introducing our accelerated CM method, some preliminaries need to be covered: interval mapping, the plain branch and bound algorithm, and interval stabbing.
By incorporating these preliminaries, we aim to build a basic foundation for our findings.
When talking about the derivation in a branch and bound algorithm for the CM problem, interval mapping has always served as a useful tool. Dealing with intervals enables the relaxation of constraints.
The other two preliminary parts center on the branch and bound algorithm. The former provides the main algorithmic structure and the latter provides the core technique that enables searching in $1$-dim less space: interval stabbing.

\begin{definition}[Interval Mapping\cite{scholz2011deterministic}]\label{definition:interval-mapping}
    Let $X$ be an interval. Then the interval operation is defined by 
    \begin{equation}
        f(X):= \{f(x): x\in X \} = \big [ \min_{x\in X}\, f(x), \max_{x\in X}\, f (x)\big ],
    \end{equation}
    where $f: X \to \mathbb{R}$ denotes a continuous function such that $f(X)$ is an interval.
\end{definition}

\subsection{Branch and Bound}

As shown in~\cite{breuel2003implementation}, Branch and Bound (BnB) is a deterministic paradigm commonly used to maximise consensus and find the global optimum of geometric computer vision problems.
The main idea of BnB is to recursively branch over the solution space and calculate bounds for the maximum cardinality of the inlier subset on all candidate sub-regions,
 then prune cubes whose upper bound is smaller than the maximal lower bound so far.
 The algorithm keeps splitting cubes until the bounds are sharp enough, i.e. the lower bound is close to the upper bound.
Generally speaking, the optimality of BnB stems from the fact that it is an exhaustive search method that uses bounding operations to prune useless branches. However, BnB gains efficiency over a brute-force search by pruning useless branches (i.e. potentially large sub-regions) at an early stage without having to examine all sub-regions down to the leaf level.

Algorithm~\ref{Alg: BnB} presents a basic BnB pipeline using Best-First-Search. It has several ingredients: cube initialization (line~\ref{Alg1: CubeInitialization}), cube subdivision (line~\ref{Alg1: split}), lower bounding operation (lines~\ref{Alg1: lb2} and~\ref{Alg1: lb}), and upper bounding operation(lines~ \ref{Alg1: ub2} and~\ref{Alg1: ub}). Each of these steps is introduced in the following.

\noindent\textbf{Cube Initialization.} Cube Initialization (line~\ref{Alg1: CubeInitialization}, Algorithm~\ref{Alg: BnB}) refers to constructing an initial cube or hypercube $\Cube_0\subset \mathbb{R}^n$, over which the BnB algorithm searches for the global optimum of the consensus maximization objective~\eqref{Problem: CM}. $\Cube_0$ should therefore be chosen such that it actually contains the global optimum. Usually, this can be done fairly easily by considering the nature of the solved problem or the data distribution. For example, one can set $[-\pi, \pi]$ as the initial $1$-dimensional cube for an angular search.

\noindent\textbf{Cube Subdivision.} In each iteration (line~\ref{Alg1: WhileLoop}, Algorithm \ref{Alg: BnB}), BnB takes a cube or hypercube 
$$\Cube = [c_1^l, c_1^r] \times ... \times [c_n^l, c_n^r] \subset \mathcal{C} \subseteq \mathbb{R}^n , \label{eq: cube}$$ 
from the priority queue (line~\ref{Alg1: Pop}) and splits it into smaller ones (line ~\ref{Alg1: split}). The splitting rule is typically chosen such that a cube is divided into $2^n$ congruent sub-cubes. In practice, in order to balance accuracy and running time, additional stopping criteria are used such as limiting the maximal splitting depth $d$ or setting a tolerance $\tau$ that limits the minimal diameter of the sub-cubes.

\noindent\textbf{BnB - Lower Bound.}  
Given the current cube $\Cube$ taken from the priority queue (line~\ref{Alg1: Pop}), we define its centre point $\mathbf{b}_c$ as 
 $$\mathbf{b}_c = [(c_1^l+c_1^r)/2, ..., (c_n^l+ c_n^r)/2]^T.$$ 
Then, a trivial lower bound of~\eqref{Problem: CM} over $\Cube$ arises as the number of points that satisfy $0 \leq f(\mathbf{b}_c, \mathbf{s}_i)\leq \epsilon$,
 since for an arbitrary $\mathbf{b}$ and the global optimum $\mathbf{b}^*$ it  holds that 
 \begin{equation}
    \sum_{i=1}^N \; \mathbf{1}(f(\mathbf{b}, \mathbf{s}_i) < \epsilon ) 
    \leq \sum_{i=1}^N \; \mathbf{1}( f(\mathbf{b}^*, \mathbf{s}_i) < \epsilon ).
 \end{equation}
 This calculation of the lower bound happens in line~\ref{Alg1: lb}. Note that any point in the cube $\Cube$ defines a valid lower bound, and the centre point is chosen for the sake of convenience.

\begin{figure}[t] % notation of b is abused
    \renewcommand{\algorithmicrequire}{\textbf{Input:}}
    \renewcommand{\algorithmicensure}{\textbf{Output:}}
    \removelatexerror
    \begin{algorithm}[H]
        \caption{Branch and Bound}\label{Alg: BnB}
        \begin{algorithmic}[1] % The number tells where the line numbering should start
            \REQUIRE Dataset $\mathcal{S} = \{\mathbf{s}_i\}_{i = 1}^N $, threshold $\epsilon$
            \ENSURE Best solution $\bm{b}^*$
            % \STATE $\Cube_0 \gets$ $\mathtt{CubeInitialization}$ \label{Alg1: CubeInitialization}
            \STATE $\Cube_0 \gets$ CubeInitialization \label{Alg1: CubeInitialization}
            \STATE $q \gets$ initialize an empty priority queue
            \STATE $L^*, \bm{b}^* \gets$ getLowerBound($\Cube_0, \mathcal{S}, \epsilon$) \label{Alg1: lb2}
            \STATE $U(\Cube_0) \gets$ getUpperBound($\Cube_0, \mathcal{S}, \epsilon$) \label{Alg1: ub2}
            \STATE insert $\Cube_0$ into $q$ with priority $U(\Cube_0)$
            \WHILE{$q$ is not empty \label{Alg1: WhileLoop} }
                \STATE pop a cube $\Cube$ from $q$ with the largest upper bound \label{Alg1: Pop}
                \STATE $L(\Cube), \bm{b} \gets$ getLowerBound($\Cube, \mathcal{S}, \epsilon$) \label{Alg1: lb}
                % \STATE $U(\Cube) \gets$ getUpperBound($\Cube, \mathcal{S}, \epsilon$) \label{Alg1: ub}
                \IF {$L^* < L(\Cube)$}
                \STATE $L^* \gets L(\Cube)$
                \STATE $\bm{b}^* \gets \bm{b}$
                \ENDIF
                \IF {$U(\Cube) == L^*$} \label{Alg1: terminates}
                    \STATE \textbf{return} $\bm{b}^*$
                \ELSIF {$U(\Cube) > L^*$} 
                    \FOR{$\Cube_i$ {\bf in} splitCube($\Cube$) \label{Alg1: split} }
                        \STATE $U(\Cube_i) \gets$ getUpperBound($\Cube_i, \mathcal{S}, \epsilon$) \label{Alg1: ub}
                        \STATE insert $\Cube_i$ into $q$
                    \ENDFOR
                \ELSE
                    \STATE discard $\Cube$
                \ENDIF
            \ENDWHILE
            \STATE \textbf{return} $\bm{b}^*$
        \end{algorithmic}
    \end{algorithm}
\end{figure}
 
\noindent\textbf{BnB - Upper Bound.} An upper bound is typically obtained by relaxing the constraint.
Let $\Cube$ be a given cube and let $f$ be the constraint function for which the bounds of $f$ over $\Cube$ can be computed using interval mapping (Definition~\ref{definition:interval-mapping}). Note that, unlike usual, we are considering inputs of intervals which means the mapping of intervals is still an interval. Without loss of generality, denote the lower and upper bounds of $f$ at $\mathbf{s}_i$ over $\Cube$ as $f_i^l$ and $f_i^r$, that is
\begin{equation}\label{eq:fibound}
    f_i^l \leq f(\Cube, \mathbf{s}_i) \leq f_i^r.
\end{equation}
% Notice that the constraint in (\ref{Problem: CM}) equals to $-\epsilon < f(\mathbf{b}, \mathbf{s}_i) < \epsilon$.
Let us recall the constraint in (\ref{Problem: CM}) and the characteristics of residuals, i.e. $0 \leq f(\mathbf{b}, \mathbf{s}_i) < \epsilon$.
Then the situation when there is no valid solution $\mathbf{b}$ in cube $\Cube$ happens when $f_i^l > \epsilon$ or $f_i^r < 0$.
Therefore, through counting points that provide an interval that may contain a solution, we can get a valid upper bound for the summation as follows:
% Therefore, a possible upper bound on $\Cube$ could be obtained by counting the number of points where there is a valid solution that exists in $\Cube$.
\begin{equation}\label{eq:sum-fibound}
\begin{aligned}
    \max_{\mathbf{b}} \sum_{i}\mathbf{1}(f(\mathbf{b}, \mathbf{s}_i)<\epsilon) 
    \leq &\; \sum_{i}\mathbf{1}([f_i^l,f_i^r] \cap [0,\epsilon])\\
    = &\; \sum_{i}\mathbf{1}(f_i^l \leq \epsilon \text{ and } f_i^r \geq 0).
\end{aligned}
\end{equation}
The relaxation is achieved by the fact that valid solution intervals all contribute to the inlier count though in reality they may not intersect with each other.
Note that, once upper and lower bounds $f_i^l$ and $f_i^r$ have been derived for each $i$, the upper bound \eqref{eq:sum-fibound} can be computed in linear time.

\subsection{Interval Stabbing}

Suppose we are given a set of intervals $\mathcal{X} = \{[x_i^l, x_i^r]\}_{i = 1}^L$. 
The problem of interval stabbing aims to find the maximal subset of intervals $\mathcal{I}$
 that are intersecting with each other, or in a more figurative way, that could be stabbed by a stabber $s\in \mathbb{R}$.
 It can be formulated as follows
\begin{equation}
    \begin{aligned}
        \max_{s, \mathcal{I}} \; & \;|\mathcal{I}| \\
        \text{s.t.} \; & \; s\in [x_i^l, x_i^r], \\
        & \; \forall \;[x_i^l, x_i^r] \in \mathcal{I} \subseteq \mathcal{X}.
    \end{aligned}
    \label{Problem:IS}
\end{equation}
The problem of Interval Stabbing (IS) (\ref{Problem:IS}) can simply be solved in $\mathcal{O}(L\,logL)$ time and $\mathcal{O}(L)$ space as presented in algorithm~\ref{Alg: IS}. It was studied in \cite{berg1997computational} as a subproblem of windowing queries that can be deterministically and efficiently solved using advanced data structures, such as interval trees, segment trees and priority trees. % Notes: not 100% sure, I just quickly read the chapter. But the above arguments are basically the same as in GORE and ACRS
Adaptive voting generalized from histogram voting has a similar idea but its time complexity is $\mathcal{O}(L^2)$~\cite{yang2019polynomial,yang2020teaser,jiao2020globally,jiao_deterministic_2021}.
In recent practices, IS is found to be a powerful tool in optimization~\cite{bustos2017guaranteed,cai2019practical,li2020globally,peng2022arcs}.
In particular,~\cite{cai2019practical,li2020globally,peng2022arcs} used Interval Stabbing to efficiently solve an inner problem, an indicative example of how Interval Stabbing can help to accelerate algorithms.

\begin{figure}[!ht]
    \renewcommand{\algorithmicrequire}{\textbf{Input:}}
    \renewcommand{\algorithmicensure}{\textbf{Output:}}
    \removelatexerror
    \begin{algorithm}[H]
        \caption{Interval Stabbing}\label{Alg: IS}
        \begin{algorithmic}[1] % The number tells where the line numbering should start
            \REQUIRE Intervals $\mathcal{X} = \{[x_i^l, x_i^r]\}_{i = 1}^L $%, masks $\{[0,1] \}_{i = 1}^N$
            \ENSURE Best Stabber $s$, \#Stabbed intervals $nStabbed$
            \STATE $I \gets$ Sort all the endpoints in $\mathcal{X}$
            \STATE $n \gets 0, count \gets 0$
            \FOR{$i = 1$ \TO $2L$}
                % \IF {$M(i) == 0$}
                \IF {$I(i)$ is an left endpoints}
                    \STATE $count \gets count+1$
                    \IF {$count > n$}
                        \STATE $nStabbed \gets count$
                        \STATE $s \gets I(i)$
                    \ENDIF
                \ELSE
                    \STATE $count \gets count - 1$
                \ENDIF
            \ENDFOR
            \STATE \textbf{return} $s, nStabbed$ %\Comment{The gcd is b}
        \end{algorithmic}
    \end{algorithm}
\end{figure}

%%%%%%%%%%%%%%%%%%%%%%%%%%%%%%%%%%%%%%%%%%%%%%%%
%%%%%%%%%%%%%%%%%%%%%%%%%%%%%%%%%%%%%%%%%%%%%%%%
%%%%%%%%%%%%%%%%%%%%%%%%%%%%%%%%%%%%%%%%%%%%%%%%
%%%%%%%%%%%%%%%%%%%%%%%%%%%%%%%%%%%%%%%%%%%%%%%%
\section{Accelerated  Consensus Maximization}\label{Sec:ACM-X}

In this section, we present our method Accelerated Consensus Maximization (ACM).
Instead of searching in the original $n$-dimensional space, ACM searches and branches over an $n-1$ dimensional space, and uses IS to deal with the remaining variable.
Implementation-wise, ACM uses the same BnB diagram as shown in Algorithm~\ref{Alg: BnB}. 
However, different from BnB, in line~\ref{Alg1: CubeInitialization}, ACM initializes an $(n-1)$-dimensional cube. In line~\ref{Alg1: lb} and~\ref{Alg1: ub}, it devises the ACM bounding operations.
The main modification of ACM lies in adjusting the bounding operations to an $(n-1)$-dimensional cube.
In the following, we will first introduce the general idea of ACM
 and then show its global optimality. Finally, we analyze its time complexity.
 For a more intuitive understanding, we use plain BnB to refer to the standard BnB mentioned above.
 Note that the general idea here is to provide a basic intuition about ACM, details may differ for specific problems.

\subsection{Core of The Method} \label{sec:CoreMethod}

Consider the $1$-dimensional CM problem 
\begin{equation}
\begin{aligned}
    \max_{b\in \mathbb{R},\mathcal{I}} &\; |\mathcal{I} | \\
    \text{s.t.} &\; f(b,\mathbf{s}_i ) \leq \epsilon, \\
    &\; \forall \;\mathbf{s}_i \in \mathcal{I} \subseteq \mathcal{D}.
\end{aligned}
\label{prob: 1D CM}
\end{equation}
For each sample $\mathbf{s}_i$, the properly defined constraint function $f$ can be used to inversely find the union of all the possible intervals of $b$
\begin{equation}
    f(b,\mathbf{s}_i ) \leq \epsilon \; \Leftrightarrow \; b \in {\textstyle \bigcup\limits_j}  \;[b_{ij}^l,b_{ij}^r].
    \label{eq:inverse}
\end{equation}

\begin{figure}
    \centering
    \includegraphics[width = 0.8\linewidth]{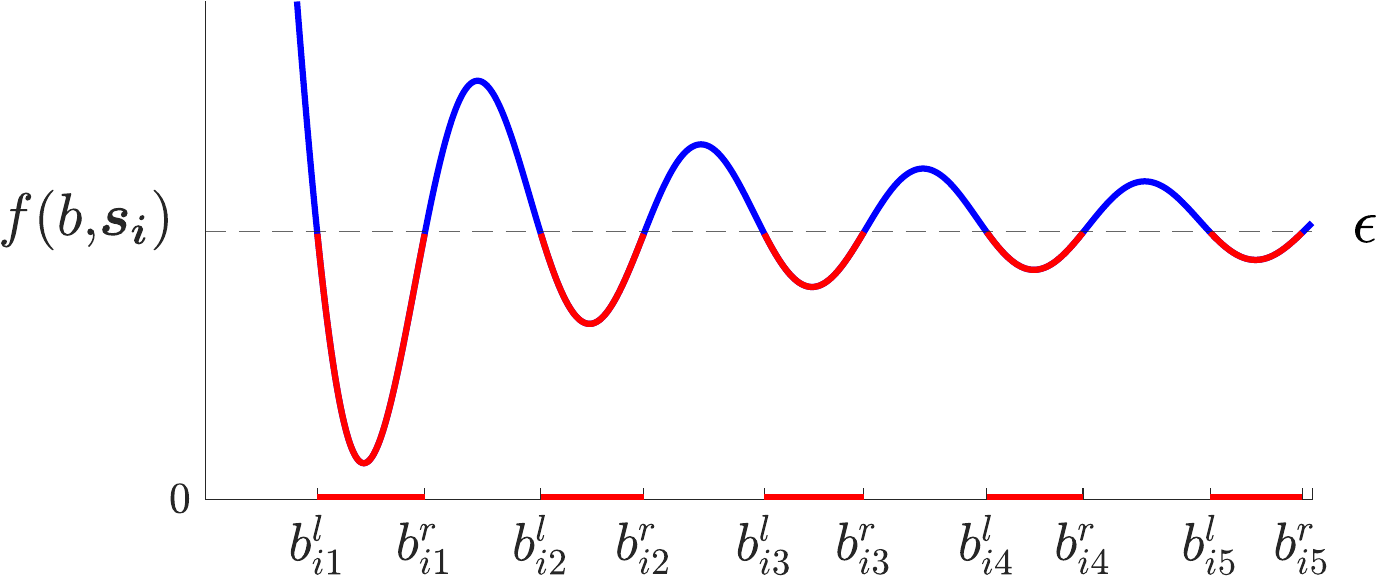}
    \caption{A visual example of (\ref{eq:inverse}): Given a threshold $\epsilon$ and sample $\mathbf{s}_i$, in general one can find a disjoint union of intervals (red, $x$-axis) for which every point $b$ satisfies $f(b,\mathbf{s}_i) \leq \epsilon$.} 
    \label{fig:eq8}
\end{figure}
\noindent A visual example is given in \figurename~\ref{fig:eq8}. Note that sample $\mathbf{s}_i$ is absent in the right-hand side of \eqref{eq:inverse}, as we use the index $i$ to denote the implicit dependency between $\mathbf{s}_i$ and the union of intervals ${\textstyle \bigcup_j}  \;[b_{ij}^l,b_{ij}^r]$. In other words, the precise form of $\;[b_{ij}^l,b_{ij}^r]$ depends on $\mathbf{s}_i$ and the constraint function $f$. While it is hard to make this dependency algorithmically explicit in our general discussion, we will see how to compute $\;[b_{ij}^l,b_{ij}^r]$ once $f$ and $\mathbf{s}_i$ are specified in concrete applications (Sections~\ref{Sec:1DProblem}-\ref{sec:4D}).

Merging the intervals if necessary, we could assume all intervals $[b_{ij}^l,b_{ij}^r]$ in \eqref{eq:inverse} are disjoint. With the equivalence in \eqref{eq:inverse}, every constraint $f(b,\mathbf{s}_i ) \leq \epsilon$ in (\ref{prob: 1D CM}) turns out to be some interval constraint, thus the 1-dimensional CM problem (\ref{prob: 1D CM}) reduces to an interval stabbing problem as in (\ref{Problem:IS}). In this way, we get the key observation: \textit{$1$-dimensional Consensus Maximization can be directly solved by Interval Stabbing},
 which finds a globally optimal solution without iteratively searching the solution space as plain BnB does. Note that, for the convenience of the discussion, in the following, we will suppose that the solution of $f(b,\mathbf{s}_i)<\epsilon$ will lead to only one interval instead of a union of intervals as illustrated in (\ref{eq:inverse}). Note that this notion-wise simplification does not affect the implementation. Having a union of intervals would simply mean that interval stabbing would have to be executed over additional intervals.
 
Let us now proceed by generalizing the above observation to the $n$-dimensional Consensus Maximization problem. 
Recall the general consensus maximization problem (\ref{Problem: CM}).
Plain BnB solves it by searching $\mathbf{b}$ over an $n$-dimensional space.
We propose Accelerated Consensus Maximisation (ACM) to search $n-1$ variables of $\mathbf{b}$ and use Interval Stabbing to solve for  the remaining $1$ variable.
As illustrated in \figurename~\ref{fig:VisualCubes}, benefitting from the lower-dimensional search space, ACM branches over a conceivably smaller number of sub-cubes which makes it converge significantly faster than plain BnB. Denote $\mathbf{b} = [b_1, b_2, ..., b_n]^T$.
Without loss of generality, we assume ACM branches over the first $n-1$ variables in $\mathbf{b}$, and we denote them as $\mathbf{b}_{1:n-1}$. 
Let $\Cube_{plain}$ represent a plain BnB cube in $n$-dimensional space and $\Cube_{ACM}$ represent the corresponding ACM cube in $(n-1)$-dimensional space. As visualized in \figurename~\ref{fig:VisualCubes}, $\Cube_{ACM}$ can be seen as $\Cube_{plain}$ less the last dimension. Next, let us introduce a general function representation for $f(\mathbf{b}|s_i)$ in which $b_n$ is conveniently separated from $\mathbf{b}_{1:n-1}$. It is given by
\begin{align}
    f(\mathbf{b}|\mathbf{s}_i) = \sum_{j}  f_j(b_{n}, h_{j}(\mathbf{b}_{1:n-1}) | \mathbf{s}_i),
\end{align}
where both $f_j$ and $h_j$ depend on the samples $\mathbf{s}_i$. 

\begin{figure}
    \centering
    \includegraphics[width = .8\linewidth]{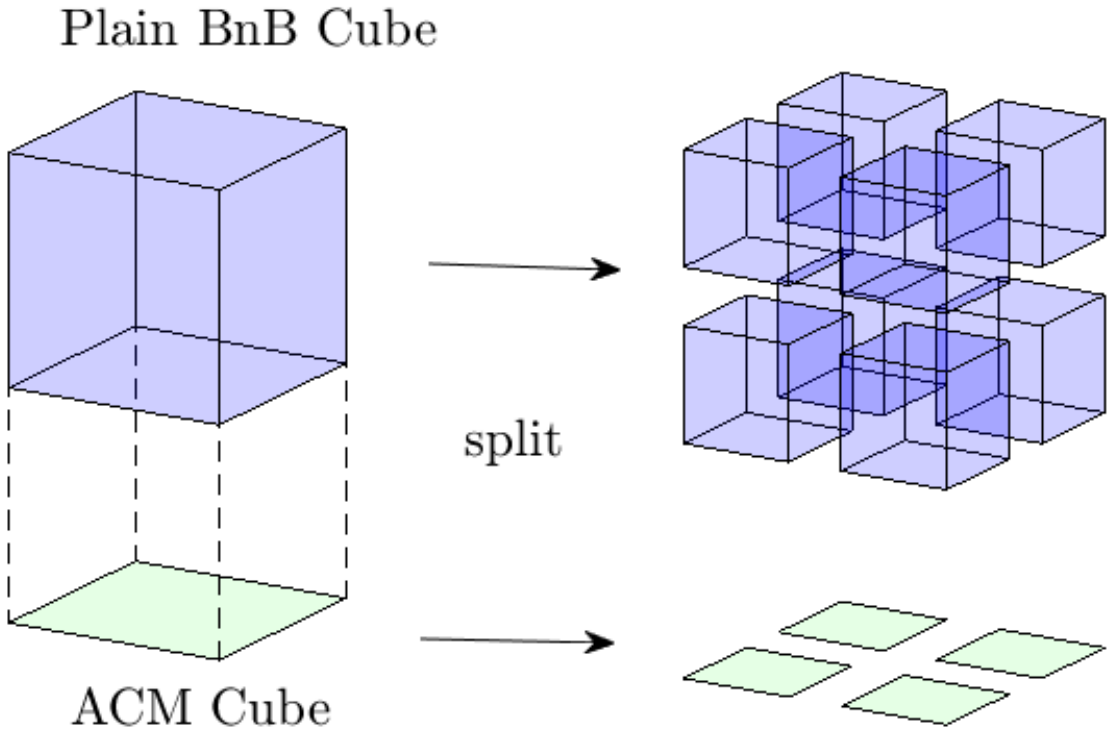}
    \caption{Search spaces of plain BnB (blue, top row) and ACM (green, bottom row). The left column shows example volumes and the right column the corresponding subcubes after splitting. Plain BnB splits an $n$-dimensional cube  into $2^n$ sub-cubes. ACM splits an $(n-1)$-dimensional cube into $2^{n-1}$ sub-cubes.}
    \label{fig:VisualCubes}
\end{figure}

To implement the above intuition of ACM, we need to answer two questions:

\begin{itemize}
    \item (\textit{ACM - Union}) Once $\mathbf{b}_{1:n-1}$ is given, how to \textit{invert} the constraint function $f$ to obtain the interval(s) $[b_{ni}^l,b_{ni}^r]$ in analogy to \eqref{eq:inverse}?
    \item (\textit{ACM - Bounds}) How to derive the lower and upper bounds of the objective for ACM (recall that ACM is a BnB-based method)?
\end{itemize}

For (\textit{ACM - Union}), we could in principle find all roots of $\left(\sum_{j}  f_j(b_{n}, h_{j}(\mathbf{b}_{1:n-1}) | \mathbf{s}_i)\right) - \epsilon = 0$ over $b_n$ using numerical methods and then recover the intervals $[b_{ni}^l,b_{ni}^r]$ from here. However, in most practical scenarios, we can solve this equation in closed form. Such closed form is problem-specific, so we will not discuss it at this moment and we will instead give concrete examples later.

Computing (\textit{ACM - Bounds}) is slightly more complicated. However, as we will next show, mild assumptions on the form of the function $f(\mathbf{b}|\mathbf{s}_i)$ will again enable us to reuse Interval Stabbing and solve for one dimension optimally while branching over the remaining $n-1$ variables.

\noindent\textbf{ACM - Lower Bound.} As explained above, for any $\mathbf{b}_{1:n-1}\in \Cube_{ACM}$ given, we can manage to find the interval of $b_n$ on each data sample $\mathbf{s}_i$
\begin{equation}
    f(b_n | \mathbf{b}_{1:n-1},\mathbf{s}_i) \leq\epsilon \; \Rightarrow \; b_n \in [\underline{b}_{ni}^l, \underline{b}_{ni}^r]%[b_{ni}^l, b_{ni}^r].
\end{equation}
Therefore, a valid lower bound can be found by doing Interval Stabbing on $\{ [\underline{b}_{ni}^l, \underline{b}_{ni}^r] \}_{i=1}^M$ %$\{[b_{ni}^l, b_{ni}^r]\}_{i = 1}^M$
\begin{equation}
\begin{aligned}
    L(\Cube_{ACM}) := &\; \max_{b_n} \sum_i \mathbf{1} (f(b_n | \mathbf{b}_{1:n-1},\mathbf{s}_i) \leq \epsilon)\\
    = &\; \max_{b_n} \sum_i \mathbf{1} (b_n\in [\underline{b}_{ni}^l, \underline{b}_{ni}^r]) \\
    \leq &\; \max_{\mathbf{b}} \sum_i \mathbf{1} (f(\mathbf{b},\mathbf{s}_i) \leq \epsilon),
\end{aligned}
\end{equation}
where the inequality here is derived by the fact that a randomly chosen $\mathbf{b}_{1:n-1}$ will always be worse or equal to the best solution $\mathbf{b}^*_{1:n-1}$.
Note that, interestingly, for the same problem the lower bound of ACM is tighter than the trivial lower bound of plain BnB as visualized in \figurename~\ref{fig:BetterLB}:
\begin{equation}
\begin{aligned}
    L(\Cube_{ACM}) = &\;\max_{b_n} \sum_i \mathbf{1} (b_n\in [\underline{b}_{ni}^l, \underline{b}_{ni}^r]) \\
    \geq &\; \sum_i \mathbf{1} (f(\mathbf{b},\mathbf{s}_i) \leq \epsilon) = L(\Cube_{plain})
\end{aligned}
\end{equation}
for any $\mathbf{b}\in \Cube_{plain}$ (with $\mathbf{b}_{1:n-1}$ fixed).
\begin{figure}
    \centering
    \includegraphics[width = .8\linewidth]{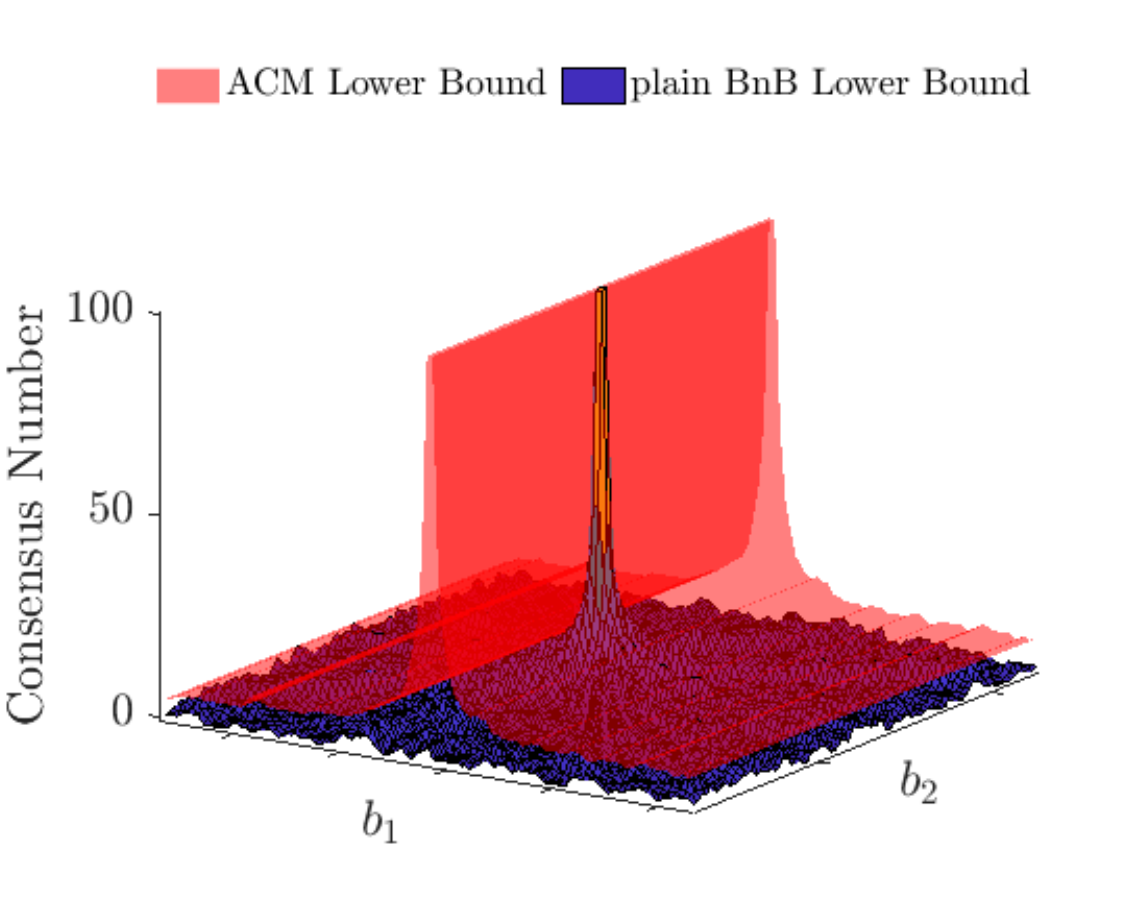}
    \vspace{-0.5cm}
    \caption{ Visualization of the lower bounds of plain BnB and ACM on a $2$-dimensional example. Plain BnB branches to search two variables $b_1$ and $b_2$. ACM searches over the space of $b_1$, and uses interval stabbing to determine $b_2$. The red curves are higher, i.e., the lower bounds of ACM are tighter.}
    \label{fig:BetterLB}
\end{figure}

\noindent\textbf{ACM - Upper Bound.} Next, let us assume that all $f_j(b_n,h_j(\mathbf{b}_{1:n-1})|\mathbf{s}_i)$ are monotonically increasing in  $h_j(\mathbf{b}_{1:n-1})$. This is a mild assumption as 1) monotonicity is only needed over the interval of $h_j(\mathbf{b}_{1:n-1})$ for $\mathbf{b}_{1:n-1}\in\mathcal{C}_{ACM}$, and that interval is decreasingly small as branching proceeds, and 2) the following derivations are equally simple to derive if some or all of the $f_j(b_n,h_j(\mathbf{b}_{1:n-1})|\mathbf{s}_i)$ are monotonically decreasing in $h_j(\mathbf{b}_{1:n-1})$, so in fact, monotonicity alone is a sufficient condition for the derivation.

After properly bounding the internal functions $h_j$ on $\Cube_{ACM}$ using Interval Arithmetic, we can get 
\begin{equation}
    h_{ij}^l \leq h_{j}(\Cube_{ACM}|\mathbf{s}_i) \leq h_{ij}^r,
\end{equation}
for each sample $\mathbf{s}_i$. Then, on $\Cube_{ACM}$ the general constraint (\ref{Problem: CM}) along with the basic characteristics of residuals translates into
\begin{equation}
\begin{aligned}
   &\;0 \leq \sum_j f_j(b_n,h_{j}(\Cube_{ACM})|\mathbf{s}_i) < \epsilon\\
   \Leftrightarrow &\; \big[\sum_j f_{j}(b_n,h_{ij}^l|\mathbf{s}_i), \sum_j f_{j}(b_n,h_{ij}^r|\mathbf{s}_i)\big] \cap [0,\epsilon].
   \label{eq:generalInterval1}
\end{aligned}    
\end{equation}

Again considering the first $n-1$ variables as given, interval (\ref{eq:generalInterval1}) can be relaxed as follows
\begin{equation}
    \begin{cases}
         \sum_j f_j(b_n|h_{ij}^l,\mathbf{s}_i) \leq \epsilon \\ \sum_j f_j(b_n|h_{ij}^r,\mathbf{s}_i) \geq 0,
    \end{cases}
\end{equation}
and using either numerical or closed-form analytical solutions, these inequalities can again be resolved to identify valid intervals $b_n\in\big[\overline{b}_{ni}^{l},\overline{b}_{ni}^r\big]$ for each sample $\mathbf{s}_i$.
A valid upper bound of ACM can be found by doing Interval Stabbing on these intervals, thus resulting in the following upper bound
\begin{equation}
    \begin{aligned}
        U(\Cube_{ACM}) 
        := &\; \max_{b_n} \, \sum_i \mathbf{1} (b_n\in \big[\overline{b}_{ni}^{l},\overline{b}_{ni}^r\big] ) \\
        \geq &\; \max_{\mathbf{b}} \sum_i \mathbf{1} (f(\mathbf{b},\mathbf{s}_i)  \leq \epsilon).
    \end{aligned}
\end{equation}

Again, the requirement of monotonicity is not a severe limitation. Monotonicity is only required on the sub-region on which the current bounds are evaluated. In practice, it is very often possible to achieve at least piece-wise monotonicity even for more complicated, non-linear functions.

For better illustration, we provide two kinds of constraint function $f$ and introduce how to separate $b_n$ out.

\begin{example} \label{example: add}
Suppose $f(\mathbf{b},\mathbf{s}_i)$ can be decomposed as the addition of two parts as
\begin{equation}
    f(\mathbf{b},\mathbf{s}_i) = h_1(\mathbf{b}_{(1:n-1)}|\mathbf{s}_i) + g_2(b_n|\mathbf{s}_i).
\end{equation}

\noindent For any $\Cube_{ACM}$ given, we can manage to find the interval of $h_1(\Cube_{ACM}|\mathbf{s}_i)$
\begin{equation}
        h_{i1}^l \leq h_1(\Cube_{ACM}|\mathbf{s}_i) \leq h_{i1}^r.
\end{equation}
Then, as discussed for the common upper bound of plain BnB, we can relax the objective of CM by counting the possible intervals as follows

\begin{equation}
\begin{aligned}    
    &\; [h_{i1}^l + g_{2}(b_n|\mathbf{s}_i), h_{i1}^r + g_2(b_n|\mathbf{s}_i)] \cap [0,\epsilon] \\
    \Leftrightarrow &\; g_2(b_n|\mathbf{s}_i) \in [-h_{i1}^r,\epsilon-h_{i1}^l] \\
    \Rightarrow &\; b_n \in [\overline{b}_{ni}^l, \overline{b}_{ni}^r],
\end{aligned}
\end{equation}
where we can deduce an interval of $b_n$ on $\Cube_{ACM}$ for each data sample $\mathbf{s}_i$. 
Therefore, a valid upper bound can be found by doing Interval Stabbing on $\{ [\overline{b}_{ni}^l, \overline{b}_{ni}^r] \}_{i=1}^M$, i.e.
\begin{equation}
        U(\Cube_{ACM}) := \max_{b_n} \, \sum_i \mathbf{1} (b_n \in [\overline{b}_{ni}^l, \overline{b}_{ni}^r]).
\end{equation}

\end{example}

\begin{example}
    Suppose $f(\mathbf{b},\mathbf{s}_i)$ can be decomposed as the multiplication of two parts as
\begin{equation}
    f(\mathbf{b},\mathbf{s}_i) = h_1(\mathbf{b}_{(1:n-1)}|\mathbf{s}_i) \cdot g_1(b_n|\mathbf{s}_i).
\end{equation}
\noindent For any $\Cube_{ACM}$ given, we can manage to find the interval of $h_1(\Cube_{ACM}|\mathbf{s}_i)$
\begin{equation}
        h_{1i}^l \leq h_1(\Cube_{ACM}|\mathbf{s}_i) \leq h_{1i}^r,
\end{equation}
and to find a finite range interval for $g_1$ we suppose that $h_{i1}^l > 0$. 
Without loss of generality, suppose $g_1(b_n|\mathbf{s}_i)$ is greater or equal to zero. Again, we can relax the objective of CM by counting the possible intervals as follows 
\begin{equation}
\begin{aligned}    
    &\; [h_{i1}^l \cdot g_{1}(b_n|\mathbf{s}_i), h_{i1}^r \cdot g_1(b_n|\mathbf{s}_i)] \cap [0,\epsilon] \\
    \Leftrightarrow &\; g_1(b_n|\mathbf{s}_i) \in [0,\epsilon/h_{i1}^l] \\
    \Rightarrow &\; b_n \in [\overline{b}_{ni}^l, \overline{b}_{ni}^r].
\end{aligned}
\end{equation}
A valid upper bound is again found by doing Interval Stabbing on $\{ [\overline{b}_{ni}^l, \overline{b}_{ni}^r] \}_{i=1}^M$.
\end{example}
Sections~\ref{Sec:1DProblem},~\ref{Sec:2DProblem}, and~\ref{Sec:3DProblem} will apply ACM to concrete geometric problems which will further explain its practical usage.

\subsection{Time Complexity Analysis}
Knowing that plain BnB and ACM are both globally optimal, let us proceed to compare their computational efficiency and analyse which one potentially converges faster to the global optimum. Note that, to compute upper and lower bounds at each iteration, plain BnB uses $\mathcal{O}(M)$ time, while ACM invokes the Interval Stabbing subroutine and therefore consumes $\mathcal{O}(M\log M)$ time. The extra $\log M$ factor appears as a slight disadvantage for ACM, but is in fact the cost that ACM pays for tremendous benefits: ACM is required to branch over a smaller space and yields tighter bounds, resulting in convergence within significantly fewer iterations. The following basic proposition mathematically consolidates our claim:

\begin{proposition}\label{Proposition: ACM iteration} 
    Run the plain BnB (resp. ACM) algorithm with the splitting rule that divides cubes into $2^n$ (resp. $2^{n-1}$) congruent sub-cubes and with the stopping criterion that terminates at the maximal splitting depth $d$. Then the following holds:
    \begin{itemize}[wide]
        \item (Perfect Bounds) If at every iteration both ACM and plain BnB prune all sub-cubes except one that contains the global optimum, then ACM needs $\mathcal{O}(2^{n-1})$ iterations to terminate, while plain BnB needs $\mathcal{O}(2^{n})$.
        \item (Invalid Bounds) If neither ACM nor plain BnB prunes any sub-cubes at each iteration, then ACM needs $\mathcal{O}(2^{(n-1)d})$ iterations to terminate, while plain BnB needs $\mathcal{O}(2^{nd})$. 
    \end{itemize}
\end{proposition}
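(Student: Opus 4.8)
The plan is to model each run as the exploration of a rooted tree whose nodes are the cubes generated by the algorithm, and to bound the number of iterations by the number of such nodes. Since each pop processes one cube and each cube is popped at most once, the iteration count is at most the total number of cubes ever generated; splitting produces $2^n$ children in plain BnB and $2^{n-1}$ in ACM, and the depth cap $d$ bounds the height. It therefore suffices to count the nodes of the generated tree in each regime, treating $d$ as a fixed stopping parameter.

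Invalid Bounds is the simpler case: with no pruning, every cube down to depth $d$ is generated, so the tree is complete. Summing over levels,
\begin{equation}
    \sum_{k=0}^{d} (2^n)^k = \frac{(2^n)^{d+1}-1}{2^n-1} = \mathcal{O}(2^{nd}),
\end{equation}
and substituting $2^{n-1}$ for $2^n$ gives $\mathcal{O}(2^{(n-1)d})$ for ACM. The one point to verify is that the top level dominates the geometric sum, which holds because the common ratio exceeds one.

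For Perfect Bounds I would argue that the generated tree collapses to a caterpillar: at every split exactly one child (the one containing $\mathbf{b}^*$) is retained for further branching and the other $2^n-1$ (resp. $2^{n-1}-1$) children become pruned leaves. The root spawns $2^n$ nodes, each surviving node down to depth $d-1$ spawns another $2^n$, and nothing deeper is produced, so the node count is $1 + d\cdot 2^n = \mathcal{O}(2^n)$ for plain BnB and $1 + d\cdot 2^{n-1} = \mathcal{O}(2^{n-1})$ for ACM, the factor $d$ being absorbed as a constant. Comparing the two regimes then recovers the headline contrast: a factor $2^n/2^{n-1} = 2$ in the perfect case against a factor $2^{nd}/2^{(n-1)d} = 2^{d}$ in the invalid case.

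The main obstacle I foresee is definitional rather than computational: fixing exactly what one ``iteration'' charges under the Best-First-Search queue of Algorithm~\ref{Alg: BnB}, and confirming that counting generated cubes is legitimate. I must check that each generated cube is popped at most once, that the early-termination test $U(\Cube)=L^*$ can only decrease the count (so the caterpillar and full-tree sizes remain valid upper bounds), and that the priority ordering never forces a pruned cube to be re-expanded. Once these bookkeeping points are settled, both claims reduce to the two elementary counts above, with the sole subtlety being that $d$ sits in the exponent in the Invalid Bounds regime but is suppressed as a multiplicative constant in the Perfect Bounds regime.
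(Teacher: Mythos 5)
Your proposal is correct and follows essentially the same route as the paper's proof: counting generated cubes, which yields $1 + d\cdot 2^n$ (resp.\ $1 + d\cdot 2^{n-1}$) in the perfect-bounds case and the geometric sum $\sum_{k=0}^{d}(2^n)^k$ (resp.\ $\sum_{k=0}^{d}(2^{n-1})^k$) in the invalid-bounds case. The paper performs exactly these two counts, likewise absorbing $d$ as a constant in the perfect case, so your tree/caterpillar framing is just a restatement of its argument.
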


Here we give a numerical feeling of Proposition~\ref{Proposition: ACM iteration}:
\begin{example}[\textit{Invalid Bounds}] Suppose the bounds of ACM and plain BnB are invalid in the sense of Proposition~\ref{Proposition: ACM iteration}. If $n = 3$ and $d = 10$, then ACM terminates in roughly $2^{20}$ iterations, and plain BnB in $2^{30}$. We furthermore have $2^{30}/ 2^{20}=1024$, implying that ACM can be $1024$ times faster than plain BnB (ignoring the extra $\log M$ factor of ACM due to Interval Stabbing).
\end{example}
Proposition \ref{Proposition: ACM iteration} suggests that if the upper and lower bounds for two algorithms are \textit{perfect}, ACM exhibits a minor advantage over plain BnB with a speedup factor of $2$, which will be compromised by the extra $\log M$ factor of invoking Interval Stabbing at every iteration. On the other hand, if the bounds are \textit{invalid} such that no cubes will be ruled out, then ACM will terminate exponentially faster than plain BnB, with a speedup factor of $2^d$; this advantage of ACM makes its extra $\log M$ term insignificant.

The scenarios of perfect and invalid bounds of Proposition \ref{Proposition: ACM iteration} are ideal; in practice, the bounds are neither perfect nor invalid. That said, ACM is preferred for two reasons: 
\begin{itemize}[wide]
    \item It is known that deriving tight lower and upper bounds is crucial but difficult for BnB-based methods, and many existing bounds tend to be ``invalid'' rather than ``perfect'' \cite{clausen1999branch,scholz2011deterministic}. Therefore, acceleration via ACM-based global optimization usually takes place.
    \item As shown in \figurename~\ref{fig:BetterLB}, the bounds we derive for ACM are based on Interval Stabbing and are more effective in pruning sub-optimal cubes than the usual bounds of plain BnB.
\end{itemize}

\figurename~\ref{fig:Bounds} illustrates the practical benefits of ACM. \figurename~\ref{fig:bounds11} shows the \textit{largest} lower bound of ACM is far tighter than plain BnB; this is a theoretical consequence of the lower bounding operation of ACM being tighter, as explained earlier (see \figurename~\ref{fig:BetterLB}). Interestingly, \figurename~\ref{fig:bounds11} shows the \textit{largest} upper bound of ACM is also tighter, even though the upper bounding operation of ACM is not necessarily better (\figurename ~\ref{fig:bounds12}). While this phenomenon seems contradictory at first glance, it is attributed to two factors: ACM has fewer sub-cubes as it branches over a smaller space; ACM prunes more sub-cubes due to its tighter lower bounds (\figurename~\ref{fig:bounds21}). In other words, ACM essentially calculates the largest upper bound over significantly fewer remaining sub-cubes (\figurename~\ref{fig:bounds22}), which is why its largest upper bound is also tighter.

\begin{figure}
    \centering
    \subfloat[Largest Bounds\label{fig:bounds11}]{%
       \includegraphics[height=0.37\linewidth]{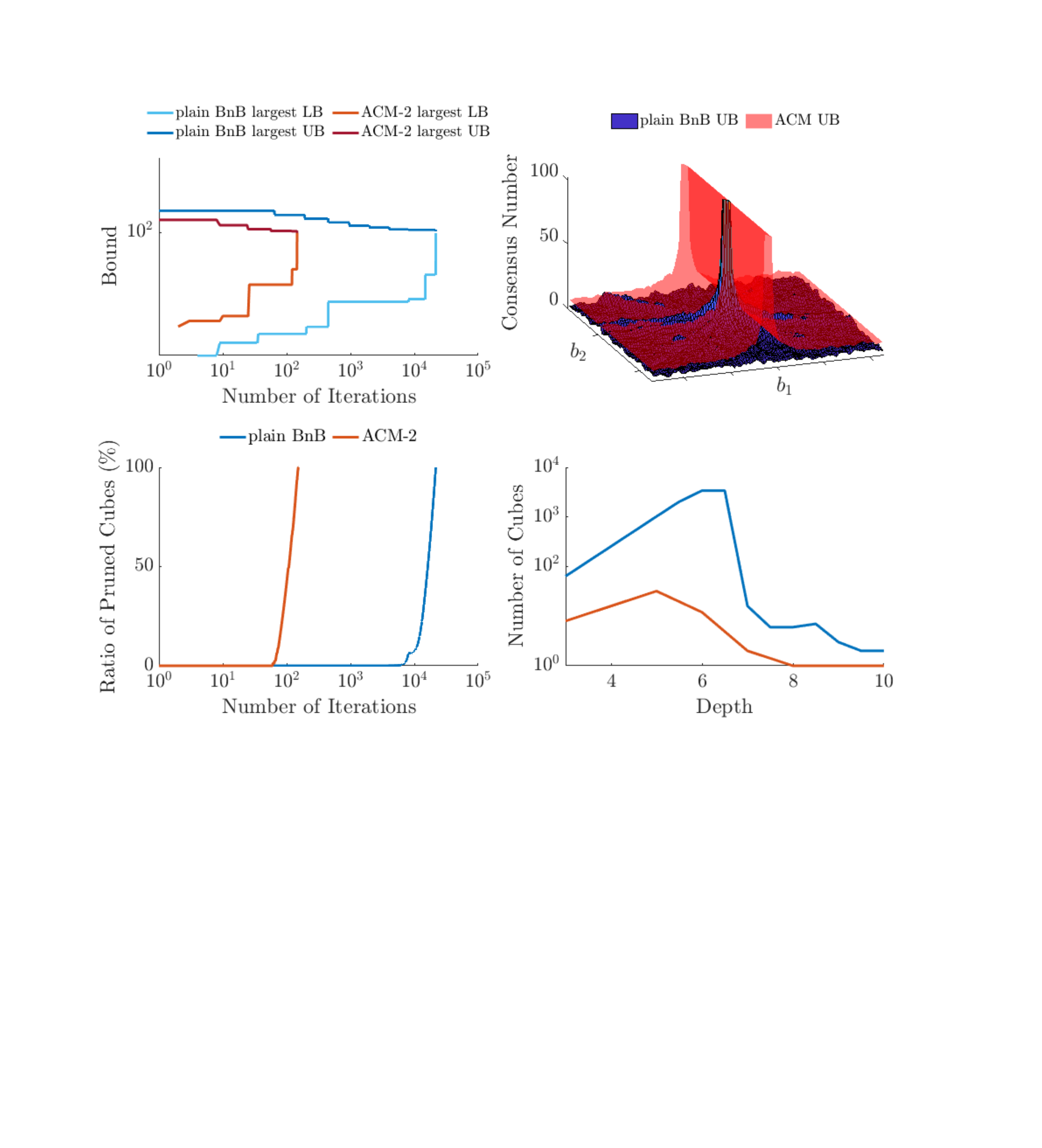}}
    \subfloat[Upper Bound\label{fig:bounds12}]{%
       \includegraphics[height=0.37\linewidth]{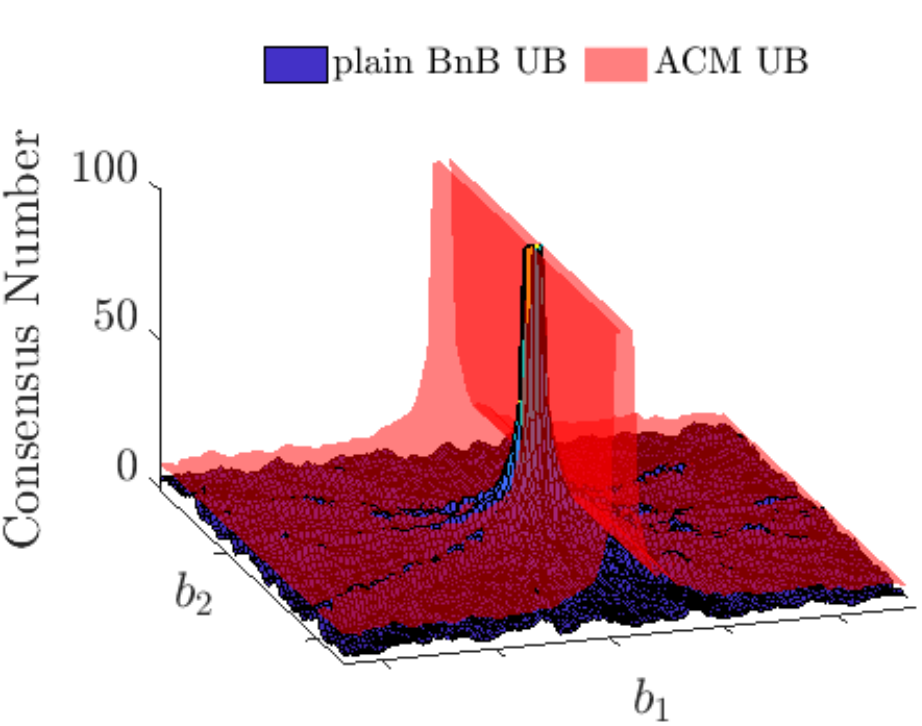}}
       
    \subfloat[Ratio of Pruned Cubes\label{fig:bounds21}]{%
       \includegraphics[height=0.35\linewidth]{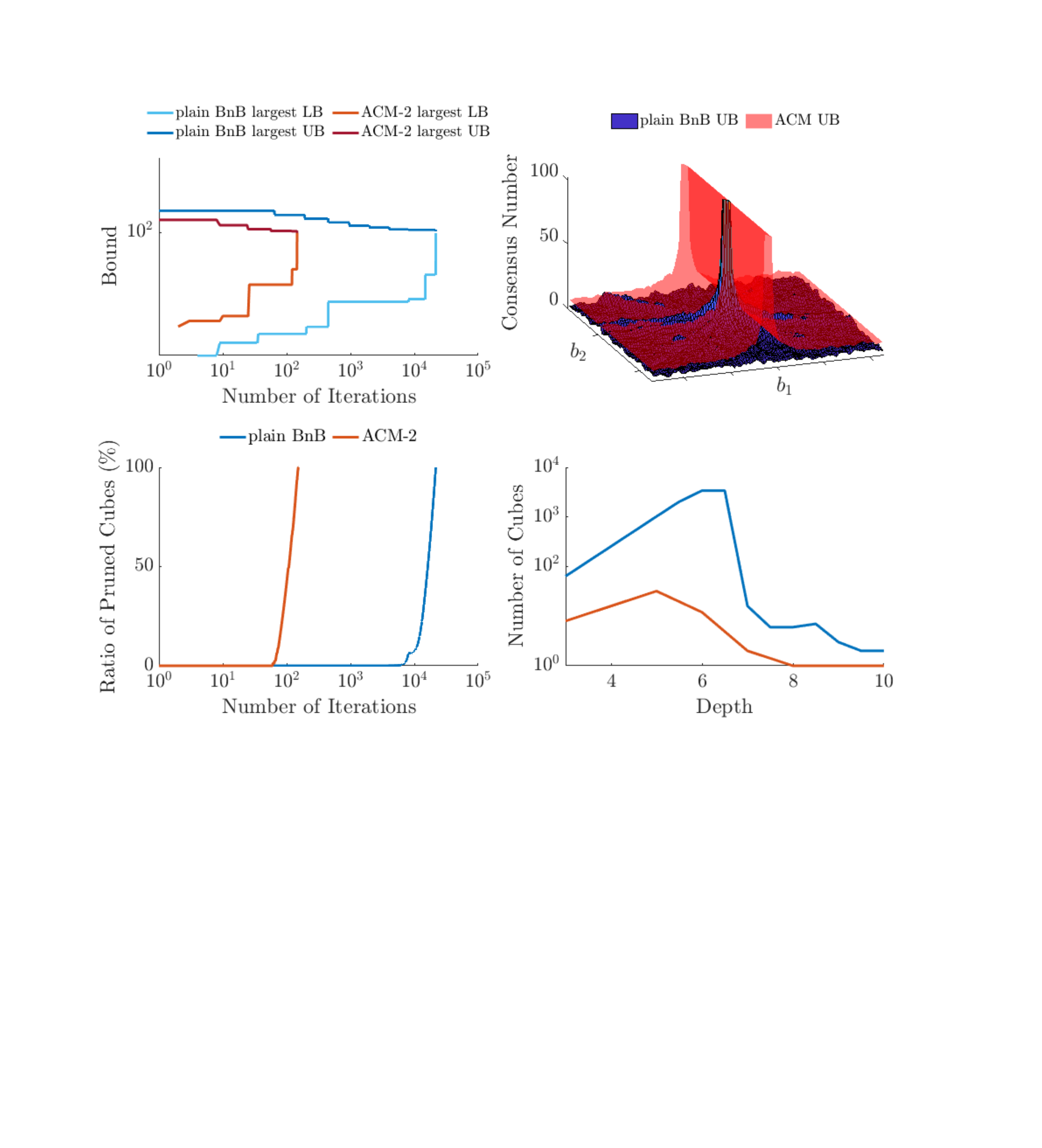}}       
    \subfloat[Remaining Cubes\label{fig:bounds22}]{%
       \includegraphics[height=0.35\linewidth]{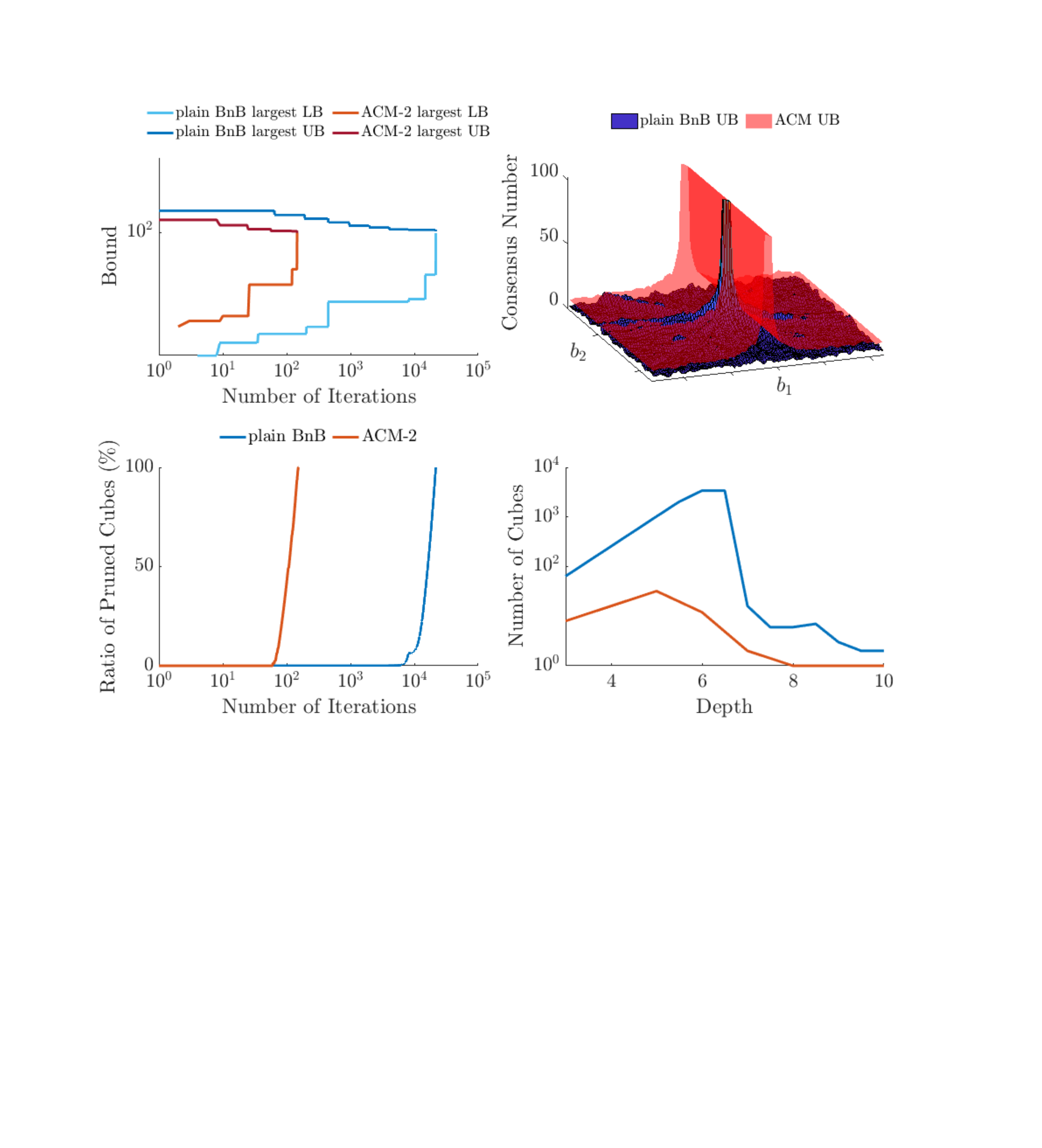}}
    \caption{Visual comparative analysis of ACM: The \textit{largest} upper and lower bounds of ACM are tighter (\ref{fig:bounds11}); the upper bounds of ACM and plain BnB are comparable (\ref{fig:bounds12}); ACM prunes a higher percentage of cubes in earlier stages (\ref{fig:bounds21}); ACM maintains much fewer cubes during execution (\ref{fig:bounds22}). }
    \label{fig:Bounds}
\end{figure}

\begin{remark} \label{Remark: termination}
    We use the maximal depth as a practical stopping criterion. Intuitively, setting a tolerance for the minimal diameter of sub-cubes seems more reasonable when comparing plain BnB and ACM. However, it can be shown that the same diameter tolerance approximately implies the same maximal depth. See the appendix for a more detailed explanation.
\end{remark}

\section{ACM-0: 3D-2D Registration}\label{Sec:1DProblem}
Firstly, we would like to introduce ACM concretely by solving a $1$-dimensional Consensus Maximization problem:
 Visual Inertial localization. This problem arises naturally in navigation systems,
 and aims at solving absolute camera pose from a set of 3D-2D correspondences given prior angular information provided by an Inertial Measurement Unit (IMU). Jiao et al.~\cite{jiao2020globally} propose a $1$-dimensional plain BnB algorithm to solve it globally optimally.
\\
\textbf{Problem formulation.}
Consider a set of 3D-2D correspondences $\{(\mathbf{p}_i, \mathbf{u}_i)\}_{i = 1}^M$,
 where $\mathbf{p}_i\in \mathbb{R}^3$ is a 3D point in world coordinates and $\mathbf{u}_i \in \mathbb{R}^2$ is a 2D image point. The correspondences follow an underlying absolute pose transformation constraint depending on $\mathbf{R}^*\in SO(3)$, $\mathbf{t}^* \in \mathbb{R}^3$, and a camera projection function $\pi: \mathbb{R}^3 \to \mathbb{R}^2$; explicitly: % It is given by
\begin{equation}
    \mathbf{u}_i = \pi(\mathbf{R}^* \mathbf{p}_i + \mathbf{t}^*)
    \label{eq:AbsolutePose}
\end{equation}
When camera calibration is known, $\mathbf{u}_i$ simply denotes the normalized image coordinates, and---using the homogeneous representation $\tilde{\mathbf{u}}_i = [\mathbf{u}_i^T,1]^T$--- (\ref{eq:AbsolutePose}) becomes the homogeneous equality constraint (omitting noise) 
\begin{equation}
    \tilde{\mathbf{u}}_i \propto \mathbf{R}^* \mathbf{p}_i + \mathbf{t}^*. \label{eq:AbsolutePose2}
\end{equation}
Using $\mathbf{R}^* = [\mathbf{r}_1^T, \mathbf{r}_2^T, \mathbf{r}_3^T]^T$, $\mathbf{t}^* = [t_1, t_2, t_3]^T$ and $\mathbf{u}_i = [u_{i1}, u_{i2}]^T$,
(\ref{eq:AbsolutePose2}) results in the two simple equality constraints
\begin{equation}
    \begin{cases}
        u_{i1} = \frac{\mathbf{r}_1^T\mathbf{p}_i + t_1}{\mathbf{r}_3^T\mathbf{p}_i + t_3} \\
        u_{i2} = \frac{\mathbf{r}_2^T\mathbf{p}_i + t_2}{\mathbf{r}_3^T\mathbf{p}_i + t_3}
    \end{cases}.
    \label{eq: AbsolutePose3}
\end{equation}
Representing the rotation $\mathbf{R}^*$ in angular form, we have
\begin{equation}
    \mathbf{R}^* = \mathbf{R}_z^*(\alpha^*)\mathbf{R}_y^*(\beta^*)\mathbf{R}_x^*(\gamma^*),
\end{equation}
where $\alpha^*, \beta^*, \gamma^*\in [-\pi, \pi]$ are the rotation angles along the $z$, $y$, and $x$ axis, respectively. Assuming $\beta$ and $\gamma$ can be estimated from inertial measurements (i.e. an IMU), only $4$ unknowns remain in the linear equations (\ref{eq: AbsolutePose3}).
Therefore, with $2$ pairs of correspondences $(\mathbf{p}_i, \mathbf{u}_i)$ and $(\mathbf{p}_j, \mathbf{u}_j)$, we can obtain $4$ equations to produce a unique solution. Using Translation Invariant Measurements (TIM) as proposed in~\cite{jiao2020globally}, 
 we can simplify the aforementioned $4$ equations to eliminate translation and arrive at a single equation that only relates the unknown angle $\alpha$
\begin{equation}
    d(\alpha^*)^{(ij)} := d_1^{(ij)} \sin \alpha^* + d_2^{(ij)} \cos \alpha^* + d_3^{(ij)} = 0, \label{eq:TIM}
\end{equation}
where $d_1^{(ij)} d_2^{(ij)}, d_3^{(ij)}$ are coefficients computed from correspondences.
As a result, the consensus maximization problem can be formulated as follows
\begin{equation}
    \begin{aligned}
        \max_{\alpha, \mathcal{I}} \;&\;|\mathcal{I}| \\
        s.t. &\; |d(\alpha)^{(ij)}| \leq \epsilon, \\
        &\; \forall \; \big ( (\mathbf{p}_i, \mathbf{u}_i), (\mathbf{p}_j, \mathbf{u}_j) \big) \in \mathcal{I}.    
    \end{aligned}
    \label{Problem: 1D}
\end{equation}
\noindent\textbf{Plain BnB \cite{jiao2020globally}.}
Denote the lower bound of $|d(\alpha)^{(ij)}|$ on a subcube $\Cube_{plain}$ as $L(\Cube_{plain})$ and the upper bound as $U(\Cube_{plain})$.
$L(\Cube_{plain})$ can be simply computed by the centre point of the cube.
On the cube $\Cube_{plain}$, we can obtain $d_l^{(ij)} \leq d(\alpha)^{(ij)} \leq d_r^{(ij)}$, thus the upper bound can be derived from the observation
\begin{equation}
    \begin{aligned}
        \max_{\alpha} \sum_{ij}\mathbf{1}( |d(\alpha)^{(ij)}| \leq \epsilon)
        & \, = \max_{\alpha} \sum_{ij} \mathbf{1}( -\epsilon  \leq d(\alpha)^{(ij)} \leq \epsilon)\\
        & \, \leq \sum_{ij}\mathbf{1}(d_l^{(ij)}  \leq \epsilon \text{ and } -\epsilon  \leq  d_r^{(ij)})\\
        & \, =: U(\Cube_{plain}),
    \end{aligned}
\end{equation}
where $\mathbf{1}(\cdot)$ is the indicator function.
\\
\textbf{ACM-0: Interval Stabbing.} Given that ACM can be used to replace $n$-dimensional plain BnB by branching over an $(n-1)$-dimensional space, only, we name our proposed solutions for specific $n$-dimensional problems as ACM-($n-1$). For example, in the case of the presently solved 1-dimensional CM problem, the method is denoted ACM-0.
For an inequality $|d_1^{(ij)} \sin \alpha + d_2^{(ij)} \cos \alpha + d_3^{(ij)}|<\epsilon$
 that is composed of basic functions like $sine$ and $cosine$, we can inversely solve for the interval of $\alpha$ with the assumption that $\alpha \in [-\pi, \pi]$.
See the appendix for more details about the angular interval resulting from constraints like (\ref{eq:TIM}).
Each data pair $\big( (\mathbf{p}_i, \mathbf{u}_i),(\mathbf{p}_j, \mathbf{u}_j) \big )$ then leads to an interval for $\alpha$ denoted $[\alpha_{ij}^l,\alpha_{ij}^r]$. 
Hence, interval stabbing can be used to directly solve for the optimal $\alpha$ without BnB as follows
\begin{equation}
\begin{aligned}
    \max_{\alpha} &\; \sum_{ij} \mathbf{1} (|d(\Cube_{ACM})^{ij}| \cap [0,\epsilon]) \\
    = \max_{\alpha} &\; \sum_{ij} \mathbf{1} (\alpha \in [\alpha_{ij}^l,\alpha_{ij}^r]).
\end{aligned}
\end{equation}

\subsection{Synthetic Experiments}
\label{sec:3d2dsynth}

\noindent\textbf{Setup.}
We present the following synthetic experiments to verify the advantage of ACM-0 over $1$-dimensional plain BnB.
To generate synthetic image correspondences, we randomly sample $200$ 3D points having a distance between $4$ and $8$ to the world frame origin,
and then generate the correspondences from rays that point to the same 3D point.
To add noise, we first extract an orthogonal plane from each bearing vector and add random noise in pixels by assuming a focal length of $800$ for the orthogonal plane. 
The perturbed points in the plane are renormalized to obtain perturbed bearing vectors~\cite{kneip2014opengv}.

Elements of translation $\mathbf{t}$ and the rotation angles are randomly generated in the interval $[-1,1]$ and $[-\pi/2, \pi/2]$, respectively. We furthermore add randomly sampled noise from $[-2,2]$ pixels for the image points, set $0.2$ as the inlier threshold for the algebraic constraint in (\ref{Problem: 1D}), and adopt $10$ as the maximal depth for plain BnB. All algorithms are implemented in C++ and repeated $100$ times on different random data to gain stable results.
The angular error is measured as $| \alpha_{gt} - \hat{\alpha} |$ in degrees, where $\alpha_{gt}$ and $\hat{\alpha}$ are the ground truth and the estimated angle, respectively.

\noindent\textbf{Results.}
As discussed, while plain BnB iteratively branches over intervals in 1 dimension, ACM-0 just directly uses Interval Stabbing, hence problem (\ref{Problem: 1D}) can be solved much more efficiently. In the following, we investigate the running time of plain BnB and ACM-0 and validate the performance of Interval Stabbing.
\figurename~\ref{fig:1Dtime} shows the mean results of plain BnB and ACM-0 where we vary the outlier ratio from $10\%$ to $90\%$ with a stepsize of $10\%$, and from $91\%$ to $95\%$ with a stepsize of $1\%$. \tablename~\ref{tab:ACM-0iter} presents the average number of iterations of plain BnB in large outlier ratio conditions. 

In general, ACM-0 is consistently faster than plain BnB and enjoys a $200 \times$-$300 \times$ speed-up ratio while leading to similar angular errors. More importantly, ACM-0 solves the $1$-dimensional problem globally optimally in real-time, while plain BnB does not attain real-time capabilities. 
More specifically, ACM-0 converges after about $10^{-2}$ sec even in extreme outlier cases (outlier ratio above $80\%$), while plain BnB needs over $1$ sec and more than $10^3$ iterations to complete. As furthermore shown in \figurename~\ref{fig:1Derror}, we obtain consistent angular errors with the method of Jiao et al.~\cite{jiao2020globally}.

\begin{figure}
    \centering
    \subfloat[Running Time\label{fig:1Dtime}]{%
       \includegraphics[height=.39\linewidth]{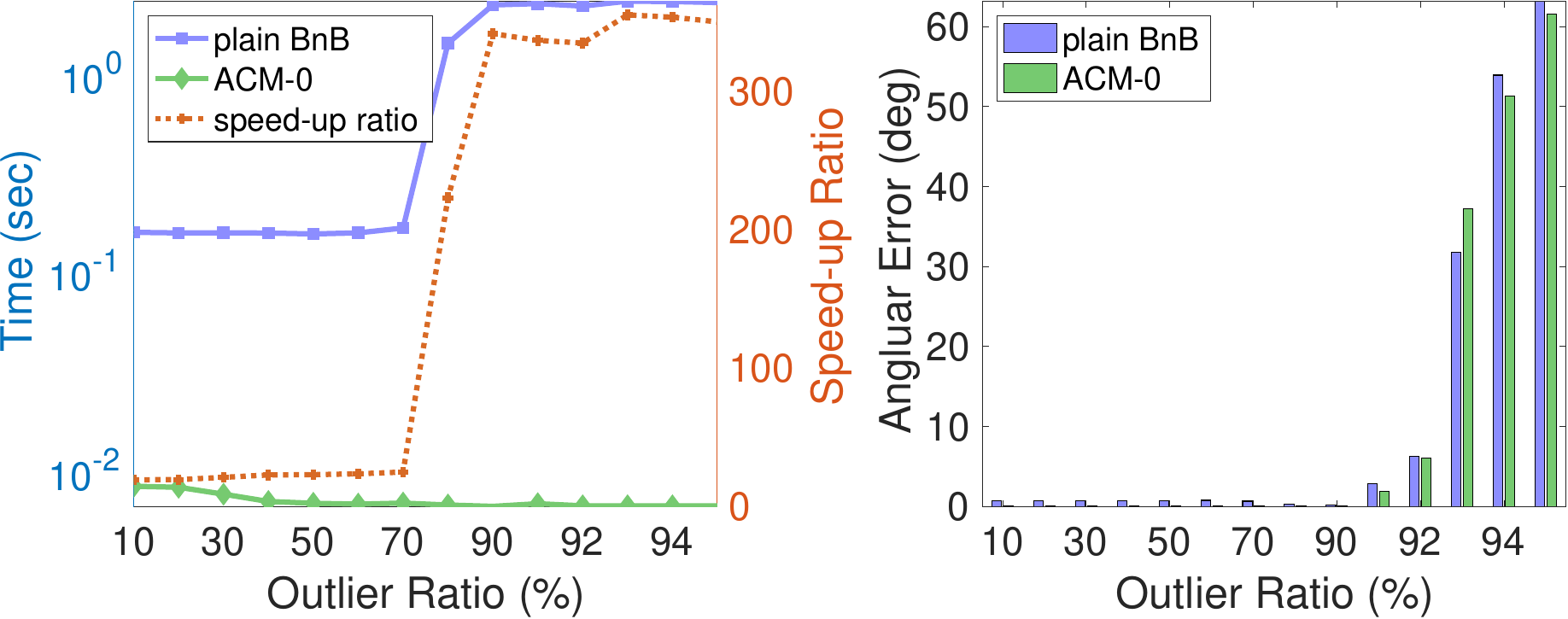}}
       \hfill
    \subfloat[Angular Error\label{fig:1Derror}]{%
       \includegraphics[height=.39\linewidth]{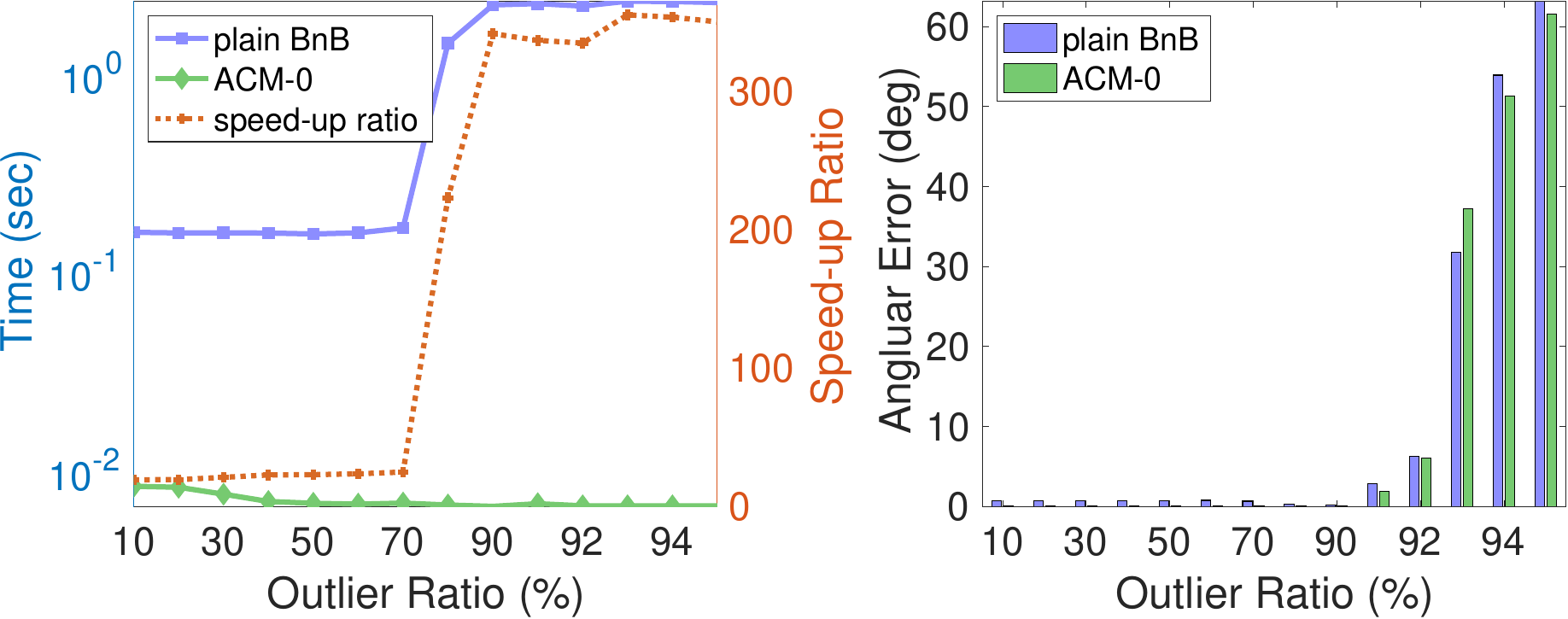}}
    \caption{3D-2D registration experiments (Section~\ref{Sec:1DProblem}) where we compare ACM-0 and the plain BnB method of \cite{jiao2020globally}. ACM-0 is more than $200$ times faster in the high-outlier regime (\figurename~\ref{fig:1Dtime}) with comparable angular errors (\figurename~\ref{fig:1Derror}). ($200$ randomly generated noisy correspondences, $100$ trials)  }
    \label{fig:1Dsimulation}
\end{figure}

\begin{table}[]
\caption{Number of iterations of plain BnB \cite{jiao2020globally}, which is always $1$ for ACM-0.}
\label{tab:ACM-0iter}
\centering
\begin{tabular}{c|rrrrrrrrrrrrrrr}
\toprule
ratio & 80\% & 90\% & 91\% & 92\% & 93\% & 94\% & 95\%\\
\midrule
iter & 1,138 & 1,842 & 1,861 & 1,822 & 1,920 & 1,898 & 1,899 \\
\bottomrule
\end{tabular}
\end{table}

\section{ACM-1: 2D-2D Registration}\label{Sec:2DProblem}
Next, we consider a $2$-dimensional Consensus Maximization problem and demonstrate how ACM can again achieve considerable acceleration by branching over a $1$-dimensional search space, only.
We consider the case of a forward-looking camera installed on a planar ground vehicle (e.g. a car driving on a flat road). The motion of the camera will be constrained to a plane, hence frame-to-frame visual odometry can be solved by finding a $3$-dimensional Euclidean transformation composed of a 2-dimensional translational and a $1$-dimensional rotational displacement in the plane. Note that this stands in contrast with five degrees of freedom in the general, calibrated relative pose scenario. By using an angle-plus-parallax parametrization and substituting in the epipolar constraint, the unobservable parallax is easily eliminated thereby resulting in a two-angle problem that can be solved globally optimally using plain BnB~\cite{liu2022globally}.

\begin{figure}
    \centering
    \includegraphics[width = .7\linewidth]{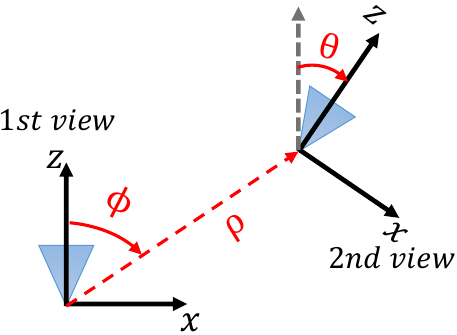}
    \vspace{-0.2cm}
    \caption{Visualization of the planar motion. The rotation is parametrized by an angle \textcolor{red}{$\theta$} as shown in  \eqref{equ:2d2dR}, and translation by its length \textcolor{red}{$\rho$} and angle \textcolor{red}{$\phi$} as shown in \eqref{equ:2d2dt}.}
    \label{fig:PlanarVisual}
\end{figure}
\noindent\textbf{Problem formulation.}
Consider a pair of $2$D-$2$D correspondences $(\mathbf{x}_i,\mathbf{x}_i^\prime)$ originating from the forward-looking camera on the vehicle under planar motion as illustrated in \figurename~\ref{fig:PlanarVisual}, where $\mathbf{x}_i = [u_1,v_1]^T\in \mathbb{R}^2$ and $\mathbf{x}_i^\prime = [u_2,v_2]^T\in \mathbb{R}^2$ denote the normalized image coordinates from the 1st view and 2nd view respectively. And denote the corresponding homogenous coordinates as $\tilde{\mathbf{x}}_i = [\mathbf{x}_i^T, 1]^T, \tilde{\mathbf{x}}_i^\prime = [\mathbf{x}_2^T, 1]^T$.
The correspondences follow an underlying relative pose transformation constraint depending on $\mathbf{R}^*\in SO(3)$ and $\mathbf{t}^* \in \mathbb{R}^3$. It is given by
\begin{equation}
    \mathbf{x}_i = \pi(\mathbf{R}^* \tilde{\mathbf{x}}_i^\prime + \mathbf{t}^*)
    \label{eq:RelativePose}
\end{equation}
where $\pi: \mathbb{R}^3 \to \mathbb{R}^2$ denotes the camera projection function.
Then the epipolar constraint is given by
\begin{equation}
    (\tilde{\mathbf{x}}_i)^T \mathbf{E}^* \tilde{\mathbf{x}}_i^\prime = 0, \label{eq:2D planar epipolar}
\end{equation}

and $\mathbf{E}^* = [\mathbf{t}^*]_\times\mathbf{R}^*$ is the essential matrix composed of camera rotation $\mathbf{R}^*\in SO(3)$ and translation $\mathbf{t}^*\in \mathbb{R}^3$ from the 2nd view to the 1st view.

Without loss of generality, we assume that the $y$-axis of the camera is pointing downwards and the principal axis forward. Given the planar motion of the vehicle in the horizontal plane, the rotation matrix $\mathbf{R}^*$ from the 2nd view to 1st view is given by the single-angle parameter matrix
\begin{equation}
    \mathbf{R}^* = \begin{bmatrix} 
    \cos\theta^* & 0 & \sin\theta^* \\
    0 & 1 & 0 \\
    -\sin\theta^* & 0 & \cos\theta^*
    \end{bmatrix},
    \label{equ:2d2dR}
\end{equation}
where $\theta^*$ is the yaw angle about the $y$ axis. 
Denoting $\rho^*$ the length of the translation vector and $\phi^*$ the direction angle of the translation with respect to the original position, the planar translation $\mathbf{t}^*$ is given by
\begin{equation}
    \mathbf{t}^* = \begin{bmatrix}
    \rho \sin\phi^* \\
    0 \\
    \rho \cos\phi^*
    \end{bmatrix}.
    \label{equ:2d2dt}
\end{equation}
See \figurename~\ref{fig:PlanarVisual} for a visualization of this setup.

Substituting $\mathbf{R}^*$ and $\mathbf{t}^*$ into (\ref{eq:2D planar epipolar}), we can easily obtain 
\begin{equation}
\begin{aligned}    
    &\;u_1v_2\cos\phi^* - v_2 \sin\phi^* \\
    &\;- u_2 v_1 \cos(\theta^* - \phi^*) - v_1\sin(\theta^* - \phi^*) = 0,\label{eq:constraint_Planar}
\end{aligned}
\end{equation}
where the length of the translation $\rho^*$ has been eliminated and only the $2$ unknown angles $\theta^*$ and $\phi^*$ are left to be found.
\\
Reformulating (\ref{eq:constraint_Planar}) in order to separate the unknown parameters, we can get
\begin{equation}
    A_1\sin(\theta_1^*+\phi_1^*) + A_2 \sin(\theta_2^*+\phi_2^*) = 0,
    \label{eq:constraint_Planar2}
\end{equation}
 where $\theta_1^* = \theta^* - \phi^*$, $\theta_2^* = \phi$ and $A_1, A_2, \phi_1$ and $\phi_2$ are coefficients computed from $\mathbf{x}_1$ and $\mathbf{x}_2$. Denoting $g_i(\theta_1, \theta_2) = A_1^{(i)}\sin(\theta_1+\phi_1^{(i)}) + A_2^{(i)} \sin(\theta_2+\phi_2^{(i)})$ as the above constraint for each correspondence $i = 1,...,M$, the Consensus Maximization problem can be naturally formulated as follows:
\begin{equation}
\begin{split}
     \max_{\theta_1, \theta_2, \mathcal{I}} \;&\;|\mathcal{I}| \\
    s.t. &\; |g_i(\theta_1, \theta_2)| \leq \epsilon, \forall \;(\mathbf{x}_i,\mathbf{x}_i^\prime) \in \mathcal{I}.
\end{split}
\label{Prob:2DPlanar}
\end{equation}
\\
\textbf{Plain BnB \cite{liu2022globally}.}
Note that the bounds generally described in Section~\ref{Sec:BnB} can be used to bound $|g_i(\theta_1, \theta_2)|$. Given a sub-cube $\Cube_{plain}$ with the centre point $[\theta_1^c,\theta_2^c]$ and $g^l_{i} \leq g_i(\Cube_{plain}) \leq g^r_{i}$, the lower bound $L(\Cube_{plain})$ and the upper bound $U(\Cube_{plain})$ are given by
\begin{align}
    L(\Cube_{plain}) = &\, \sum_i \; \mathbf{1} (|g_i(\theta_1^c,\theta_2^c)| \leq \epsilon)\\
    U(\Cube_{plain}) =&\, \sum_i \; \mathbf{1} (g_i^l \leq \epsilon \text{ and } g_i^r \geq -\epsilon).
\end{align}
\\
\textbf{ACM-1.}
Without loss of generality, suppose ACM-1 is used to branch over the space of $\theta_1$, and $\theta_2$ is searched by Interval Stabbing. First, consider the lower bound estimated by ACM-1. 
As mentioned for plain BnB, a trivial lower bound on a subcube $\Cube_{plain}$ can be obtained by the centre point $[\theta_1^c,\theta_2^c]$. Denoting the interval of $\theta_2$ inversely derived from $g_i(\theta_1^c, \theta_2)$ as 
$[\underline{\theta}_{2i}^l, \underline{\theta}_{2i}^r]$ (See the appendix for details), the lower bound for ACM-1 on a subcube $\Cube_{ACM}$ can be obtained by solving the Interval Stabbing problem
\begin{equation}
\begin{aligned}
    L(\Cube_{ACM-1}) = &\; \max_{\theta_2} \sum_i \mathbf{1} ( |g_i(\theta_1^c, \theta_2)|  \leq \epsilon) \\
    = &\; \max_{\theta_2} \sum_i \mathbf{1} (\theta_2 \in [\underline{\theta}_{2i}^l, \underline{\theta}_{2i}^r]).
\end{aligned}
\end{equation}
As introduced in Section~\ref{Sec:ACM-X}, the upper bound of ACM-1 can be obtained by solving the Interval Stabbing problem
\begin{equation}
\begin{aligned}
    U(\Cube_{ACM-1}) = &\; \max_{\theta_2} \sum_i \mathbf{1} ( |g_i(\Cube_{ACM-1}, \theta_2)| \cap [0,\epsilon]) \\
    = &\; \max_{\theta_2} \sum_i \mathbf{1} (\theta_2 \in [\overline{\theta}_{2i}^l, \overline{\theta}_{2i}^r]).
\end{aligned}
\end{equation}

\subsection{Synthetic Experiments}\label{subsection:2DProblem-synthetic-experiments}
\noindent\textbf{Setup.} 
We use the same setting with noise as mentioned in the last section except that now the camera undergoes planar rotation and translation to generate coordinates in a second view. We randomly sample the yaw angle within the interval $[-\pi/3, \pi/3]$ while the roll and pitch angles are kept to $0$. The translation angle is sampled from the interval $[-\pi/3, \pi/3]$, and the translation norm $\rho$ from the interval $[-2,2]$.  We still use $10$ as the maximal depth and $0.02$ as the inlier threshold for both plain BnB~\cite{liu2022globally} and ACM-1.

\begin{figure}
    \centering
    \subfloat[Running Time\label{fig:2Dtime}]{%
       \includegraphics[height=.39\linewidth]{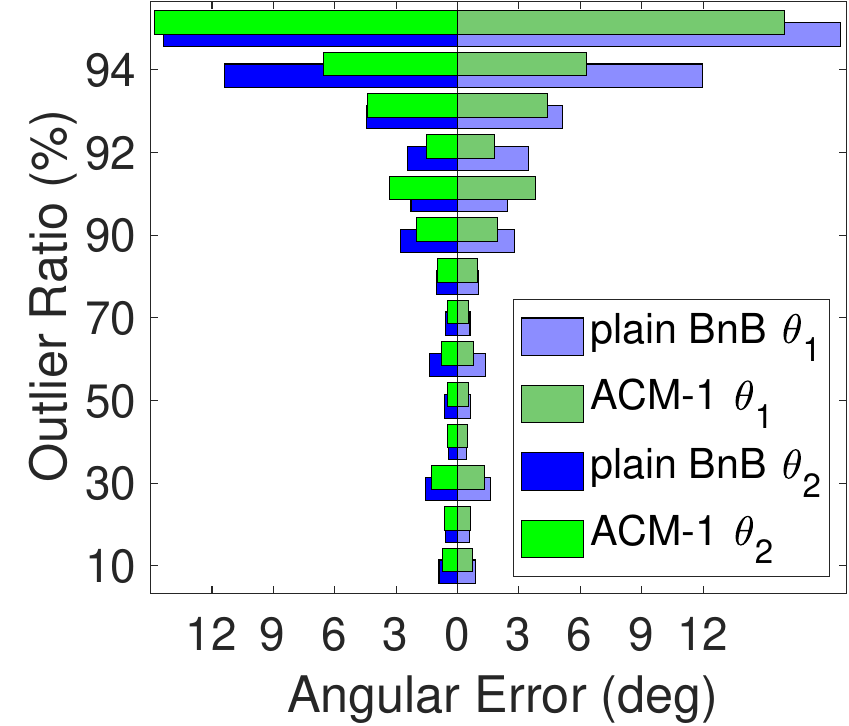}}
       \hfill
    \subfloat[Angular Error\label{fig:2Derror}]{%
       \includegraphics[height=.39\linewidth]{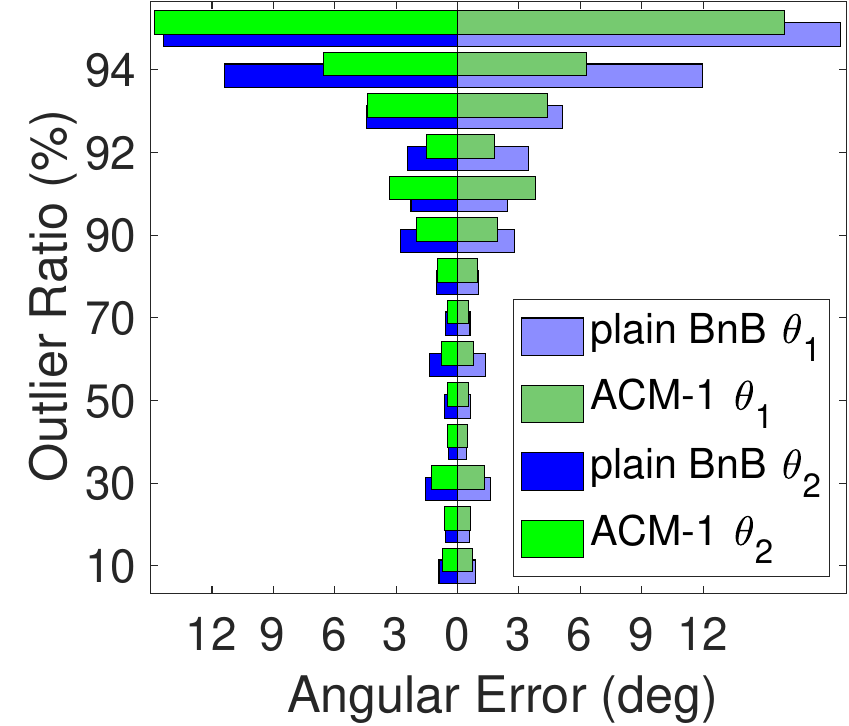}}
       
    \subfloat[Number of Iterations\label{fig:2Diter}]{%
       \includegraphics[height=.41\linewidth]{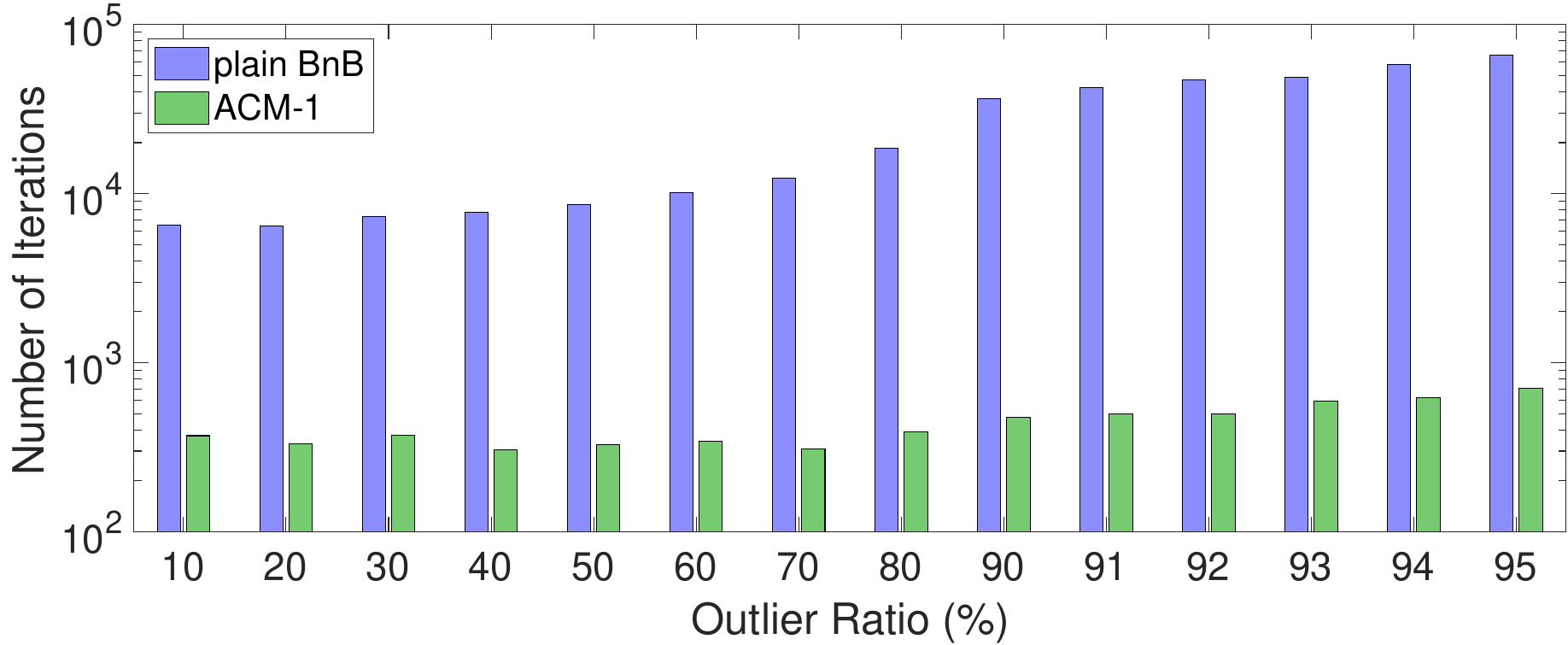}}
    \caption{2D-2D registration experiments (Section~\ref{subsection:2DProblem-synthetic-experiments}). ACM-1 attains $20\times$ speed-up over plain BnB at extreme outlier rates (\figurename~\ref{fig:2Dtime}) with comparable angular errors (\figurename~\ref{fig:2Derror}) and converging within much fewer iterations (\figurename~\ref{fig:2Diter}). ($200$ randomly generated noisy correspondences, $100$ trials) }
    \label{fig:2Dsimulation}
\end{figure}

\noindent\textbf{Results.} We implement both plain BnB~\cite{liu2022globally} and ACM-1 based on the bounding functions defined previously and investigate their relative performance. We were not able to run the code of~\cite{liu2022globally}, so we implemented our own version of plain BnB. In~\cite{liu2022globally} it was written that their implementation took $18$ seconds to process $50$ correspondences from two consecutive KITTI images with $10^{-4}$ inlier threshold. On the other hand, our implementation took fewer than $1$ second to process $200$ SIFT correspondences with the same inlier threshold $10^{-4}$. This validates the correctness and efficiency of our implementation of plain BnB as a baseline.

The outlier ratio is varied from $10\%$ to $90\%$ with a step size of $10\%$, and then from $91\%$ to $95\%$ with a step size of $1\%$. The mean results are displayed in \figurename~\ref{fig:2Dsimulation}.
As expected, it is clear that ACM-2 needs much less time to converge than plain BnB and achieves a $2 \times$-$20 \times$ speed-up ratio (\figurename~\ref{fig:2Dtime}).
At the same time, ACM-2 maintains similarly low angular errors than plain BnB for both $\theta_1$ and $\theta_2$ (\figurename~\ref{fig:2Derror}). 
Taking a detailed look at \figurename~\ref{fig:2Dtime} indicating run-times, ACM-2 stably takes an average of about $0.03$ sec for a wide range of outlier ratios. This is contrast to plain BnB, for which running times are much more sensitive with respect to the outlier ratio and increase from $0.0615$ sec to $0.5833$ sec.
Taking into account the significant difference in the number of iterations as shown in \figurename~\ref{fig:2Diter}, this result also confirms our claim that in practice the extra $\log M$ complexity for interval stabbing in each iteration is negligible.

\subsection{Real Experiments}\label{subsection:2DProblem-real-experiments}
In order to verify the effectiveness of our algorithm for globally optimal 2D-2D registration, we conducted real-world experiments using the KITTI dataset~\cite{geiger2013vision}. 

\noindent\textbf{Setup.}
We process all 11 sequences (00-10) containing ground truth. 
We extract SIFT features~\cite{lowe2004distinctive} for each image, set the number of best features to retain (\textit{nfeatures}) to $1000$, and perform brute-force matching of feature points on pairs of consecutive frames. Finally, the obtained pixel correspondences were normalized using the known intrinsic camera matrix. Ground truth relative pose parameters $\mathbf{R}$ and $\mathbf{t}$ are extracted by concatenating the corresponding absolute poses, followed by extraction of the ground truth values for $\theta_{gt}$ and $\phi_{gt}$.

 The threshold $\epsilon$ for the algebraic error is set to $10^{-4}$, and the maximal depth for plain BnB and ACM-1 are set to $20$ to obtain more precise solutions. 
The initial cubes of plain BnB and ACM-1 are defined as $[-\pi/2, \pi/2]^2$  and $[-\pi/2, \pi/2]$ respectively. Errors are expressed by the absolute difference between estimated and ground truth angles $\theta$ and $\phi$.

\noindent\textbf{Results.}
Results on the KITTI datasets again demonstrate the expected speed-up of ACM-1 over plain BnB.
The median results over all image pairs in each sequence are summarised in \tablename~\ref{tab: KITTI}.
Overall, ACM-1 obtains about a $2 \times$ speed-up and almost identical errors. The faster running time of ACM-1 is again the result of a significantly reduced number of iterations. Note that---although the SIFT feature is rather old---it still returns high-quality correspondences between the pairs of views, thereby leading to low outlier scenarios in which the advantage of ACM over plain BnB is less pronounced.

\begin{table*}[t!]
\centering
\caption{Experiments on the KITTI dataset (Section \ref{subsection:2DProblem-real-experiments}). Results are the median over all image pairs of each sequence. }
\label{tab: KITTI}
\begin{tabular}{cc|rrrrrrrrrrr}
\toprule
\textbf{Metric} & \textbf{Method $\backslash$ Seq.} & \textbf{00} & \textbf{01} & \textbf{02} & \textbf{03} & \textbf{04} & \textbf{05} & \textbf{06} & \textbf{07} & \textbf{08} & \textbf{09} & \textbf{10}\\
\midrule
\multirow{2}{*}{Rot (deg)} & plain BnB      & 0.1304 & 0.1052 & 0.0894 & 0.1053 & 0.0544 & 0.0730 & 0.0862 & 0.0635 & 0.0765 & 0.0802 & 0.0879 \\
& ACM-1  & 0.1293 & 0.1050 & 0.0891 & 0.1008 & 0.0536 & 0.0733 & 0.0872 & 0.0622 & 0.0767 & 0.0801 & 0.0852 \\
\multirow{2}{*}{Trans (deg)} & plain BnB      & 4.4905 & 2.6027 & 2.6733 & 5.2189 & 1.3994 & 3.5529 & 2.9525 & 4.7887 & 3.2316 & 2.9904 & 3.2244 \\
& ACM-1 & 4.4743 & 2.5903 & 2.6752 & 5.4160 & 1.4357 & 3.5397 & 2.9182 & 4.7979 & 3.2276 & 2.9550 & 3.1836 \\
\midrule
\multirow{2}{*}{Time (sec)} & plain BnB      & 1.0639 & 0.5721 & 0.9455 & 1.7012 & 0.8860 & 1.1202 & 0.8915 & 1.4539 & 1.0266 & 0.9447 & 1.0122 \\
& ACM-1 & 0.5041 & 0.3033 & 0.4697 & 0.7798 & 0.5364 & 0.5390 & 0.4561 & 0.7330 & 0.5158 & 0.4786 & 0.4871 \\
\multirow{2}{*}{Iter} & plain BnB      & 16,234 & 10,047 & 15,571 & 22,793 & 13,524 & 16,238 & 14,043 & 19,926 & 16,107 & 15,198 & 15,951 \\
& ACM-1 & 1,692  & 1,307  & 1,712  & 2,197  & 1,713  & 1,750  & 1,664  & 2,095  & 1,792  & 1,711  & 1,714  \\
\midrule
\multicolumn{2}{c|}{Speed Up Ratio} & 2.0116 & 1.8783 & 1.9562 & 2.1045 & 1.5778 & 1.9840 & 1.9061 & 1.9645 & 1.8939 & 1.9430 & 1.9971 \\
\bottomrule
\end{tabular}
\end{table*}

\begin{remark}
    There might be a concern about algebraic constraints used in geometric vision problems of Secs.~\ref{Sec:1DProblem} and~\ref{Sec:2DProblem} leading to less accurate results than enforcing geometric constraints. We would like to point out that the use of algebraic errors in consensus maximization or robust estimation problems is common, and, as demonstrated in prior works, algebraic error metrics are in most cases powerful enough to at least classify outliers. It would furthermore be straightforward to append geometric error-based refinement to the solution identified by ACM.
\end{remark}

\section{ACM-2: 3D-3D Registration}\label{Sec:3DProblem}

In this section, we apply ACM to the problem of 3D point set registration (i.e. Procrustes alignment), which aims at aligning two point sets by a rigid transformation. % with the scale known

\noindent \textbf{Problem formulation}
Consider two overlapping noiseless 3D point sets $\mathcal{P} = \{\mathbf{p}_i\}_{i=1}^{M_1}$ and $\mathcal{Q} = \{\mathbf{q}_j\}_{j=1}^{M_2}$. Each overlapping pair $(\mathbf{p}_i, \mathbf{q}_j)$ satisfies the relation 
\begin{equation}
    \mathbf{q}_j = \mathbf{R}^* (\mathbf{p}_i + \mathbf{t^*}),
    \label{eq:CorrlessRigid}
 \end{equation}
where $\mathbf{R}^*\in SO(3)$ and $\mathbf{t^*}\in \mathbb{R}^3$ represent the underlying rotation and translation respectively.

Instead of solving the $6$-DoF problem directly, it has been shown that algebraic operations can be used to eliminate $3$ DoF and solve globally for the resulting $3$-DoF problem by Branch and Bound.
Furthermore, ACM can reduce $1$ more DoF to solve it by branching over a $2$-dimensional search space. 

Similar to the problem of image registration, it is possible to extract 3D features on point sets and match correspondences,
which leads to correspondence-based 3D-3D registration. On the other hand, it is also possible to simultaneously search for transformation and correspondence, thereby leading to so-called correspondence-less registration.
In the following, we present ACM-2 for both variants of 3D point set registration. 
We name ACM for the correspondence-based 3D-3D registration problem \textit{ACM-2 Corr}, and ACM for correspondence-less 3D-3D registration \textit{ACM-2 Corrless}.

\subsection{Correspondence-based Registration}\label{Sec:3DCorrProblem}
Provided a set of 3D point correspondences $\{(\mathbf{p}_i, \mathbf{q}_i) \}_{i = 1}^M$, where $\mathbf{p}_i, \mathbf{q}_i \in \mathbb{R}^3$, the problem of correspondence-based point set registration can be understood as finding the rotation $\mathbf{R}^*$ and translation $\mathbf{t^*}$ that satisfy the relation
\begin{equation}
    \mathbf{q}_i = \mathbf{R}^* (\mathbf{p}_i + \mathbf{t^*}), % + \epsilon_i, 
    \label{eq: 3DCorrbased}
\end{equation}
where we omitted noise for simplicity.  Taking the norm on both sides, rotation invariance easily leads to the constraint
\begin{equation}
    \| \mathbf{q}_i \|_2 = \| \mathbf{p}_i + \mathbf{t}^* \|_2. \label{eq: 3DCorrbased RI}
\end{equation}
The unknown rotation $\mathbf{R}^*$ is thus eliminated, and the unknown translation $\mathbf{t}^*$ can be solved by the following Consensus Maximization problem 
\begin{equation}
    \begin{aligned}
        \max_{\mathbf{t}, \mathcal{I}} &\; |\mathcal{I}| \\
        s.t. &\; |\| \mathbf{q}_i \|_2 - \| \mathbf{p}_i + \mathbf{t} \|_2 | \leq \epsilon, \forall \;(\mathbf{p}_i, \mathbf{q}_i) \in \mathcal{I}
    \end{aligned}
    \label{Problem: 3DCorrbased}
\end{equation}
\textbf{Plain BnB.}
Let us define $h_i(\mathbf{t}) = |\| \mathbf{q}_i \|_2 - \| \mathbf{p}_i + \mathbf{t} \|_2 |$.
Given a subcube $\Cube_{plain}$ with the centre point $\mathbf{t}_c$, we may denote the interval of $h_i(\mathbf{t})$ on this volume as $[h_i^l, h_i^r]$. Again, a trivial lower bound and the intuitive upper bound result to
\begin{align}
    &L(\Cube_{plain}) = \sum_i \; \mathbf{1} (h_i(\mathbf{t}_c) \leq \epsilon)\\
    &U(\Cube_{plain}) = \sum_i \; \mathbf{1} (h_i^l \leq \epsilon \text{ and } h_i^r \geq -\epsilon).
\end{align}
\\
\textbf{ACM-2 Corr.} Write the translation $\mathbf{t}$ and point $\mathbf{p}_i$ respectively as $\mathbf{t} = [t_1,t_2,t_3]^T$ and $\mathbf{p}_i = [p_{1i},p_{2i},p_{3i}]^T$. Without loss of generality, let ACM-2 branch over the first two dimensions of the translation space and solve globally optimally for $t_3$ using Interval Stabbing.

For the constraint $h_i(\mathbf{t}) \leq \epsilon$, we can derive
\begin{align}
    &\; \qquad -\epsilon \leq \| \mathbf{q}_i \|_2 - \| \mathbf{p}_i + \mathbf{t} \|_2 \leq \epsilon \\
    \Leftrightarrow &\; \quad \;\| \mathbf{q}_i \|_2 -\epsilon \leq \| \mathbf{p}_i + \mathbf{t} \|_2 \leq \| \mathbf{q}_i \|_2 +\epsilon \\
    \Leftrightarrow &\; (\| \mathbf{q}_i \|_2 -\epsilon)^2 \leq \| \mathbf{p}_i + \mathbf{t} \|_2^2 \leq (\| \mathbf{q}_i \|_2 +\epsilon)^2. \label{eq:3DCorreq1}
\end{align}
Since $\|\mathbf{p}_i +\mathbf{t} \|_2^2$ can be separated into two additive parts, $h_1 (t_1,t_2) = (p_{1i}+t_1)^2+(p_{2i} + t_2)^2 $ and $h_2 (t_3) = (p_{3i} + t_3)^2$, this is a special case of Example~\ref{example: add}. In particular, according to Section~\ref{sec:CoreMethod}, the lower bound for ACM-1 here arises as 
\begin{equation}
\begin{aligned}
    L(\Cube_{ACM}) = &\; \max_{t_3} \sum_i \mathbf{1} ( |h_i(t_1^c, t_2^c; t_3)| \leq \epsilon) \\
    = &\; \max_{t_3} \sum_i \mathbf{1} (t_3 \in [\underline{t}_{3i}^l, \underline{t}_{3i}^r]),
\end{aligned}
\end{equation}
and the upper bound of ACM-1, can be derived similarly follows from Example~\ref{example: add} (see also the appendix for details):
\begin{equation}
\begin{aligned}
    U(\Cube_{ACM}) = &\; \max_{t_3} \sum_i \mathbf{1} ( |h_i(\Cube_{ACM}; t_3)| \leq \epsilon) \\
    = &\; \max_{t_3} \sum_i \mathbf{1} (t_3 \in [\overline{t}_{3i}^l, \overline{t}_{3i}^r]).
\end{aligned}
\end{equation}
\subsection{Correspondence-less Registration}\label{Sec:3DCorrlessProblem}

Limited by the quality and distinctiveness of point cloud feature descriptors, matching correspondences can be problematic. This has motivated researchers to design correspondence-less methods. For example, registration with all-to-all matching lists is considered in \cite{yang2020teaser}. With the observation that inner distances of a point set are invariant with respect to rigid transformations (cf. \figurename~\ref{fig:bunny1}), Liu et al.~\cite{liu2018efficient} propose to filter out most of the useless correspondences and then feed them to a plain BnB called GoTS to search the translation and correspondences simultaneously.

\begin{figure}
    \centering
    \includegraphics[width = 0.7\linewidth]{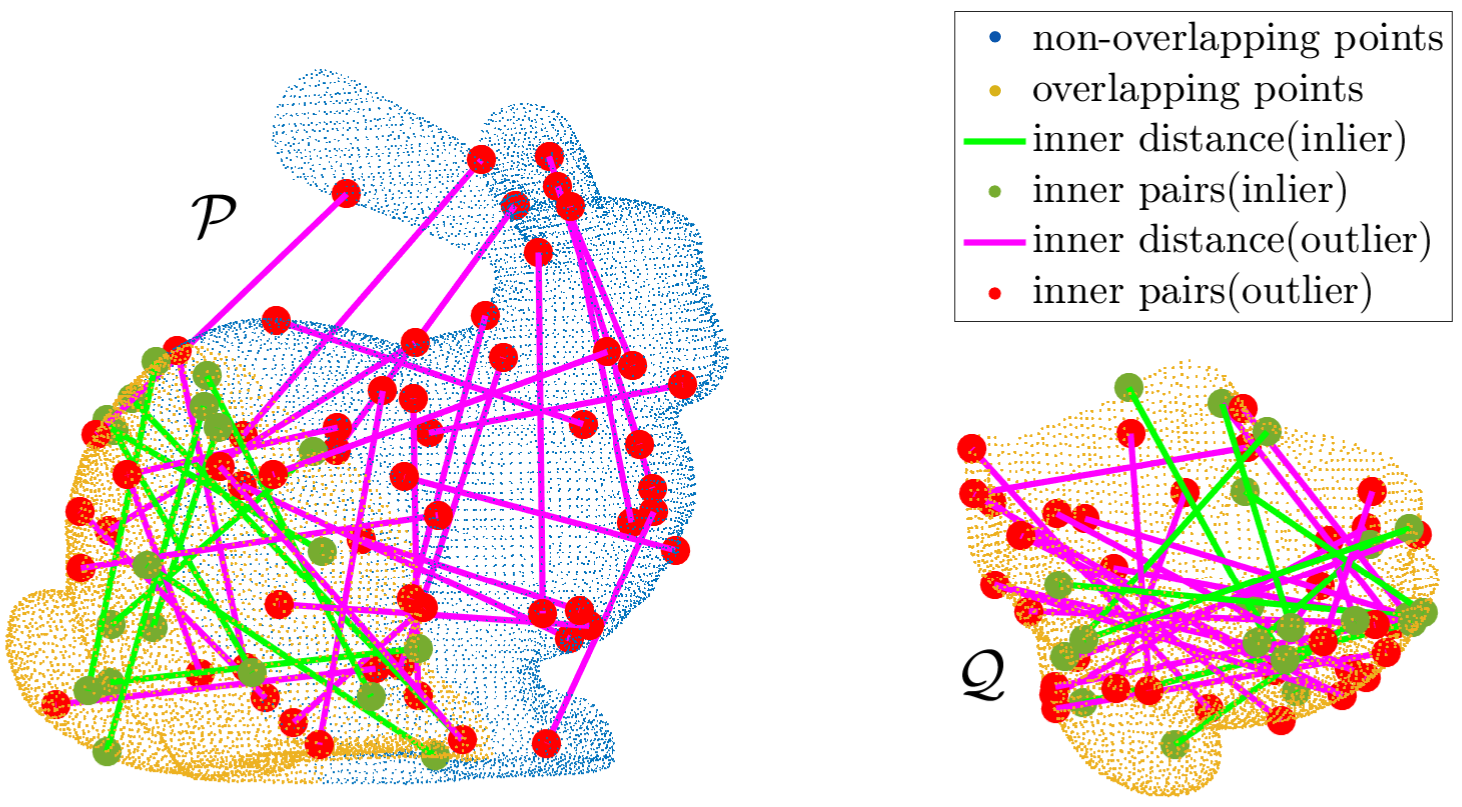}
    \caption{Visualization of the correspondence-less registration setup (Section~\ref{Sec:3DCorrlessProblem}) and the Bunny experiment (Section~\ref{subsection:3DProblem-real-experiments}). The full Bunny point set on the left can be regarded as the reference set. The partial Bunny point cloud on the right is the moving set and constitutes the overlapping part after a motion compensation. The length of the inter-pair line segments is invariant when point sets rigidly move.}
    \label{fig:bunny1}
\end{figure}

Formally, using the general problem formulation in (\ref{eq:CorrlessRigid}), suppose the point pairs ($\mathbf{q}_{i1^*},\mathbf{q}_{i2^*}$) are randomly picked from the overlapping point part of set $\mathcal{Q}$ and the pairs ($\mathbf{p}_{j1^*},\mathbf{p}_{j2^*}$) are the corresponding ground truth pairs from the reference point set $\mathcal{P}$. One key observation is that 
\begin{equation}
\begin{aligned}    
    \| \mathbf{q}_{i1^*} - \mathbf{q}_{i2^*} \|_2 &\;= \| \mathbf{R}(\mathbf{p}_{j1^*} + \mathbf{t}) - \mathbf{R}(\mathbf{p}_{j2^*} + \mathbf{t}) \|_2 \\
    &\;= \| \mathbf{p}_{j1^*} - \mathbf{p}_{j2^*} \|_2, \label{eq:pairwise distance}
\end{aligned}
\end{equation}
where $\mathbf{R}\in SO(3)$ is any rotation and $\mathbf{t} \in \mathbb{R}^{3}$ any translation. 

For ground truth pairs ($\mathbf{q}_{i1^*},\mathbf{q}_{i2^*}$) and ($\mathbf{p}_{j1^*},\mathbf{p}_{j2^*}$), the distance $| \| \mathbf{q}_{i1^*} - \mathbf{q}_{i2^*} \|_2 - \| \mathbf{p}_{j1^*} - \mathbf{p}_{j2^*} \|_2 |$ is expected to be small, so, with an appropriate  threshold $\tau$, we can then keep only the pair $\big( (\mathbf{q}_{i1},\mathbf{q}_{i2}), (\mathbf{p}_{j1},\mathbf{p}_{j2}) \big)$ that satisfies
\begin{equation}
    |\| \mathbf{q}_{i1} - \mathbf{q}_{i2} \|_2 - \| \mathbf{p}_{j1} - \mathbf{p}_{j2} \|_2 | \leq \tau,
\end{equation}
for every $i1,i2 \in \{1,...,M_1\}$ and every $j1, j2\in \{1,...,M_2\}$ ($i1\neq i2, j1\neq j2$), and remove all other pairs. The remaining point pairs $\big( (\mathbf{q}_{i1},\mathbf{q}_{i2}), (\mathbf{p}_{j1},\mathbf{p}_{j2}) \big)$ are called Rotation Invariant (RI) pairs.

Next, observe that $\|\mathbf{q}_{i}\|=\|\mathbf{p}_{j} + \mathbf{t}^* \|$ further provides constraints for RI pairs. Using the $\ell_\infty$ norm, we have
\begin{equation}
    \| \mathbf{F}_{m,n} (\mathbf{t}^*) \|_\infty \leq \epsilon,
\end{equation}
\begin{equation}
    \mathbf{F}_{m,n}(\mathbf{t}):= 
    \begin{bmatrix} 
    \| \mathbf{p}_{i1}+\mathbf{t} \|_2 - \|\mathbf{q}_{j1}\|_2 \\
    \|\mathbf{p}_{i2}+\mathbf{t}\|_2 -\|\mathbf{q}_{j2}\|_2
    \end{bmatrix},
\end{equation}
where $m:=(i1,i2),n:=(j1,j2)$.
We can then formulate the Consensus Maximization problem as:
\begin{equation}
\begin{split}
     \max_{\mathbf{t}, \mathcal{I}} \;&\;|\mathcal{I}| \\
    s.t. &\; \| \mathbf{F}_{m,n} (\mathbf{t}) \|_\infty  \leq  \epsilon, \\
    &\;\forall \; \big ( (\mathbf{p}_{j1},\mathbf{p}_{j2}),(\mathbf{q}_{i1},\mathbf{q}_{i2}) \big ) \in \mathcal{I}.
\end{split}
\label{Prob:3DCorrless}
\end{equation}
Note that the constraint $\| \mathbf{F}_{m,n} (\mathbf{t}) \|_\infty \leq \epsilon$---
which aims to check whether the largest element between $|\|\mathbf{p}_{i1}+\mathbf{t} \|_2 - \|\mathbf{q}_{j1}\|_2|$ and
$|\|\mathbf{p}_{i2}+\mathbf{t}\|_2 -\|\mathbf{q}_{j2}\|_2|$ is smaller than $\epsilon$---
can also be interpreted as a check whether $|\|\mathbf{p}_{i1}+\mathbf{t} \|_2 - \|\mathbf{q}_{j1}\|_2|\leq \epsilon$ and
$|\|\mathbf{p}_{i2}+\mathbf{t}\|_2 -\|\mathbf{q}_{j2}\|_2|\leq \epsilon$ at the same time.

\begin{remark}\label{Remark: Corrless relax}
    The problem formulated in (\ref{Prob:3DCorrless}) is not exactly the same as the one in \cite{liu2018efficient},
     where they solve 
    \begin{equation}
        \max_{\mathbf{t}} \, \sum_{m} \max_{n} \mathbf{1}(\| \mathbf{F}_{m,n} (\mathbf{t}) \|_\infty  \leq  \epsilon).
    \label{Prob:3DCorrless Original}
    \end{equation}
    Note that the main difference between (\ref{Prob:3DCorrless}) and (\ref{Prob:3DCorrless Original}) is the number of counted inliers.
    As can be observed, (\ref{Prob:3DCorrless Original}) seeks a unique correspondence for each $m$.
    It is easy to prove though that the solution of (\ref{Prob:3DCorrless}) is also the solution of (\ref{Prob:3DCorrless Original})
    (see the appendix).
\end{remark} 

\noindent\textbf{Plain BnB.}
 Given the cube $\Cube_{plain}$ with centre point $\mathbf{t}_c$ and half of its diameter $r_{plain}$, the lower bound is again trivially given by using $\mathbf{t}_c$. To provide a clear explanation on the upper bound, let us use the new index notation $u:=(i1,j1), v:=(i2,j2)$. For any translation $\mathbf{t}$ of the cube $\Cube_{plain}$ and $\mathbf{x}\in \mathbb{R}^3$, we have a basic geometric inequality
 \begin{equation}
    \| \mathbf{x}+ \mathbf{t}_c \|_2 - r_{plain} \leq \|\mathbf{x} + \mathbf{t} \|_2 \leq \| \mathbf{x} + \mathbf{t}_c \|_2 + r_{plain}.\label{eq:Geometric3D}
\end{equation}

Therefore, for each pair $(\mathbf{p}_{i1}, \mathbf{q}_{j1})$, writing $c_{u} := \|\mathbf{q}_{j1}\|_2 - \|\mathbf{p}_{i1}+\mathbf{t}_c \|_2$, we can get 
\begin{align}
    c_u - r_{plain} \leq \|\mathbf{q}_{j1}\|_2 &\;- \|\mathbf{p}_{i1}+\mathbf{t} \|_2 \leq c_u + r_{plain},\\
    c_u^l \leq |\|\mathbf{q}_{j1}\|_2 &\;- \|\mathbf{p}_{i1}+\mathbf{t} \|_2| \leq c_u^r,
\end{align}
where $c_u^l = \min\{ 0,  |c_u - r_{plain}|, |c_u + r_{plain}|\}$ and $c_u^r = \max\{ |c_u - r_{plain}|, |c_u + r_{plain}|\}$. Similarly, denote $c_v = \|\mathbf{q}_{j2}\|_2 - \|\mathbf{p}_{i2}+\mathbf{t}_c \|_2$,
$c_v^l = \min\{ 0,  |c_v - r_{plain}|, |c_v + r_{plain}|\}$ and $c_v^r = \max\{ |c_v - r_{plain}|, |c_v + r_{plain}|\}$. We now can obtain 
\begin{equation}
\begin{aligned}
    \| \mathbf{F}_{m,n}  (\Cube_{plain})\|_\infty &\;= \max \{ [c_u^l,c_u^r],[c_v^l,c_v^r] \}\\
    &\; = [\max\{ c_u^l,c_v^l \}, \max\{ c_u^r,c_v^r \}].
\end{aligned}
\end{equation}
Therefore, the upper bound of plain BnB is defined by
\begin{equation}
    \begin{aligned}
        U(\Cube_{plain}) &\; = \sum_{u,v} \textbf{1} ([\max\{ c_u^l,c_v^l \}, \max\{ c_u^r,c_v^r \}] \cap [0,\epsilon])\\ 
        &\; = \sum_{u,v} \textbf{1} (\max\{c_u^l,c_v^l\} \leq \epsilon)\\
        &\; \geq \max_{\mathbf{t}} \sum_{u,v} \textbf{1} (\|\mathbf{F}_{u,v}  (\mathbf{t})\|_\infty.
    \end{aligned}
\end{equation}
\noindent\textbf{ACM-2 Corrless.}
Without the loss of generality, suppose ACM-2 Corrless branches over the first two dimensions, and uses Interval Stabbing to find $t_3$.
Given the cube $\Cube_{ACM}$ with centre point $\mathbf{t}_{c(1:2)}$ and its half-diameter $r_{ACM}$,
consider the following sub-constraint for each pair $(\mathbf{p}_{i1}, \mathbf{q}_{j1})$
\begin{align}
    &\;| \| \mathbf{p}_{i1} + \mathbf{t} \|_2 - \| \mathbf{q}_{j1} \|_2 | \leq \epsilon \\
    \Leftrightarrow &\; \| \mathbf{p}_{i1} + \mathbf{t} \|_2^2 \cap [(-\epsilon+ \|\mathbf{q}_{j1} \|_2)^2, (\epsilon + \|\mathbf{q}_{j1} \|_2)^2]. \label{eq:sub-contraint}
    \end{align}
In order to separate out $t_3$, let us define 
\begin{align} 
h_1(\mathbf{p}_{i1(1:2)},\mathbf{t}_{(1:2)}) &\;= \|\mathbf{p}_{i1(1:2)}+\mathbf{t}_{(1:2)}\|_2^2 \\
h_2(\mathbf{p}_{i1(3)},t_3) &\;= \|\mathbf{p}_{i1(3)}+t_3\|_2^2,
\end{align}
such that $\| \mathbf{p}_{i1} + \mathbf{t} \|_2^2 = h_1(\mathbf{p}_{i1(1:2)},\mathbf{t}_{(1:2)}) + h_2(\mathbf{p}_{i1(3)},t_3)$.

Again, the lower bound of ACM-2 Corrless is modified from its pendant in plain BnB.
Given $\mathbf{t}_{c(1:2)}$ and substituting $h_1(\mathbf{p}_{i1(1:2)},\mathbf{t}_{c(1:2)})$ into (\ref{eq:sub-contraint}), an interval $[\underline{t}_{3u}^l,\underline{t}_{3u}^r]$ for $t_3$ is again easily found by function inversion.
Similarly, on the same cube we can derive another interval $ [\underline{t}_{3v}^l,\underline{t}_{3v}^r]$ for $t_3$ from the sub-constraint on $(\mathbf{p}_{i2}, \mathbf{q}_{j2})$. 
Therefore, the trivial lower bound of ACM-2 Corrless is given by
\begin{equation}
    L(\Cube_{ACM})= \max_{t_3} \;\sum_{u,v} \textbf{1} (t_3\in [\underline{t}_{3u}^l,\underline{t}_{3u}^r] \cup [\underline{t}_{3v}^l,\underline{t}_{3v}^r] ),
\end{equation}
which can be solved globally by Interval Stabbing.

The upper bound of ACM-2 Corrless is derived in the same manner as in Example~\ref{example: add}.
Note that we can also get the $2$-dimensional version of (\ref{eq:Geometric3D}), denote the interval of $h_1(\mathbf{p}_{i1(1:2)},\mathbf{t}_{(1:2)})$ on $\Cube_{ACM}$ as
\begin{equation}
    h_{1u}^l \leq h_1(\Cube_{ACM},\mathbf{t}_{(1:2)}) \leq h_{1u}^r.
\end{equation}
Relaxation on sub-constraint (\ref{eq:sub-contraint}) is achieved by counting the following intervals
\begin{equation}
    \begin{aligned}
        &\; h_2(\mathbf{p}_{i1(3)},t_3) \\ 
        \in &\; [(-\epsilon+ \|\mathbf{q}_{j1} \|_2)^2 - h_{1u}^r, (\epsilon + \|\mathbf{q}_{j1} \|_2)^2- h_{1u}^l] \\
        \Rightarrow &\; t_3 \in [\overline{t}_{3u}^l,\overline{t}_{3u}^r].
    \end{aligned}
\end{equation}
Similarly, we can also derive the relaxed interval of $t_3$ on $(\mathbf{p}_{i2}, \mathbf{q}_{j2})$ and denote it as $[\overline{t}_{3v}^l,\overline{t}_{3v}^r]$. 
Finally, a valid upper bound of ACM-2 Corrless can be found by doing Interval Stabbing on the problem
\begin{equation}
    U(\Cube_{ACM}) = \max_{t_3} \; \mathbf{1} (t_3 \in [\overline{t}_{3u}^l,\overline{t}_{3u}^r] \cup [\overline{t}_{3v}^l,\overline{t}_{3v}^r]).
\end{equation}
Once the optimal translation $\mathbf{t}^*$ is found, the unknown rotation $\mathbf{R}^*$ can be efficiently estimated by existing algorithms. For example,~\cite{parra2014fast} proposed a $3$-dimensional BnB with fast search, and ~\cite{peng2022arcs} proposed an efficient and accurate method to simultaneously search rotation and correspondences.

\begin{figure*}
    \centering
    \subfloat[Running Time\label{fig:3DCorrtime}]{%
       \includegraphics[height=.2\linewidth]{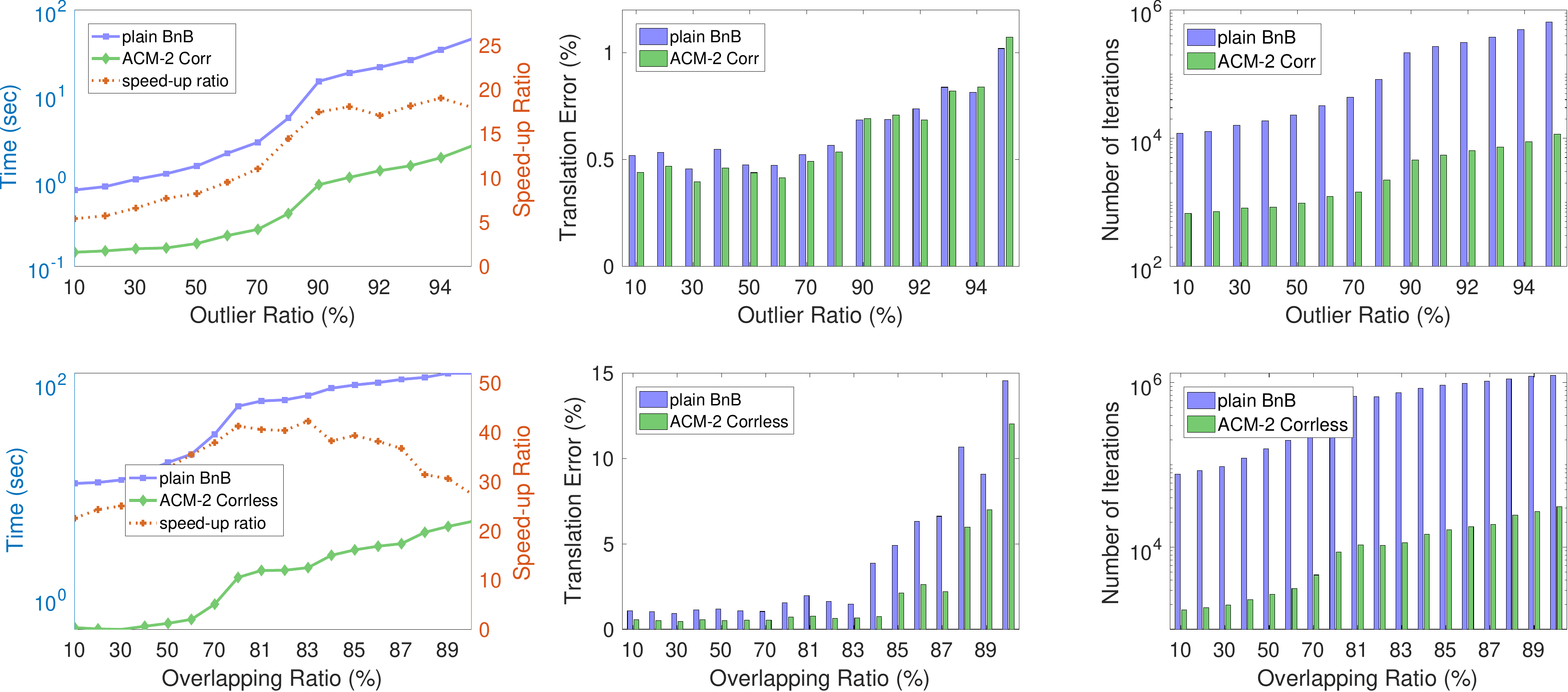}}
       \hfill
    \subfloat[Translation Error\label{fig:3DCorrerror}]{%
       \includegraphics[height=.2\linewidth]{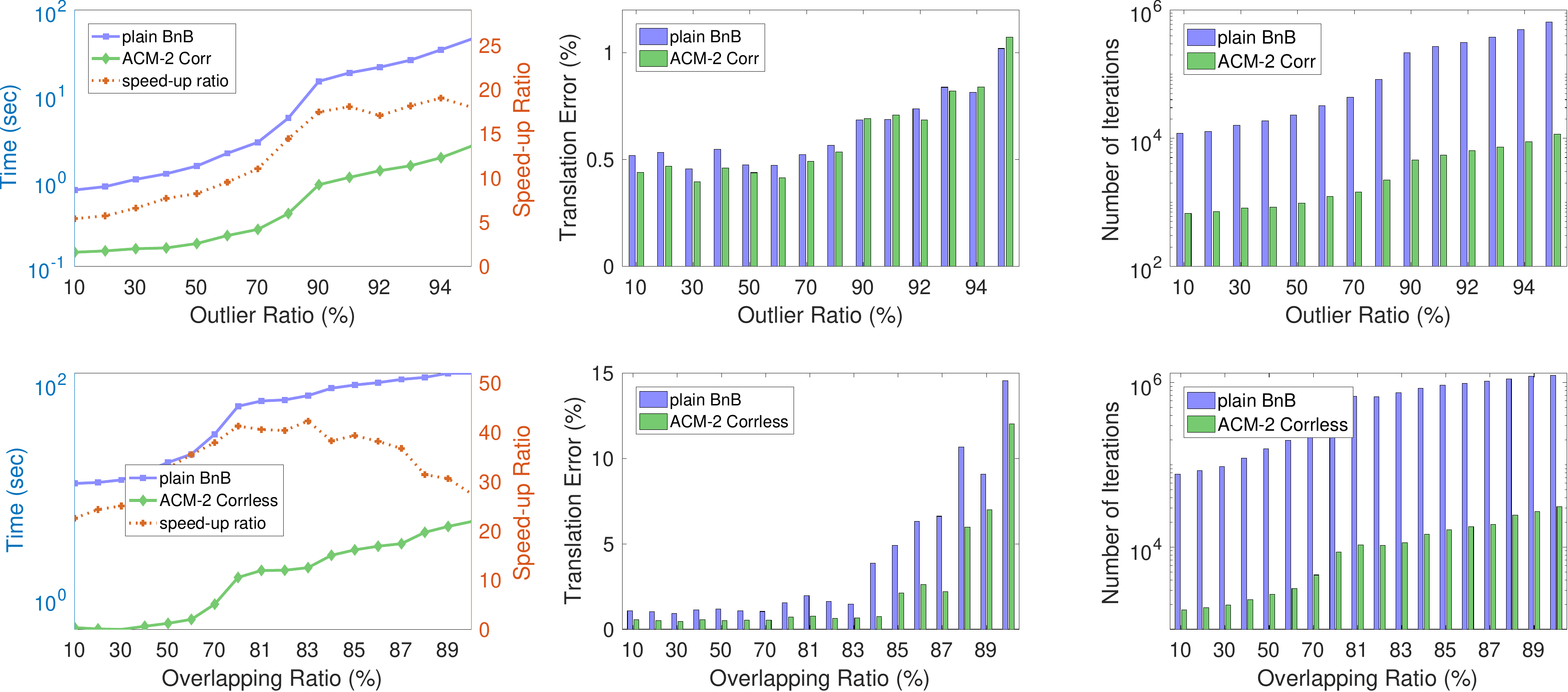}}
       \hfill
    \subfloat[Translation Error\label{fig:3DCorriter}]{%
       \includegraphics[height=.2\linewidth]{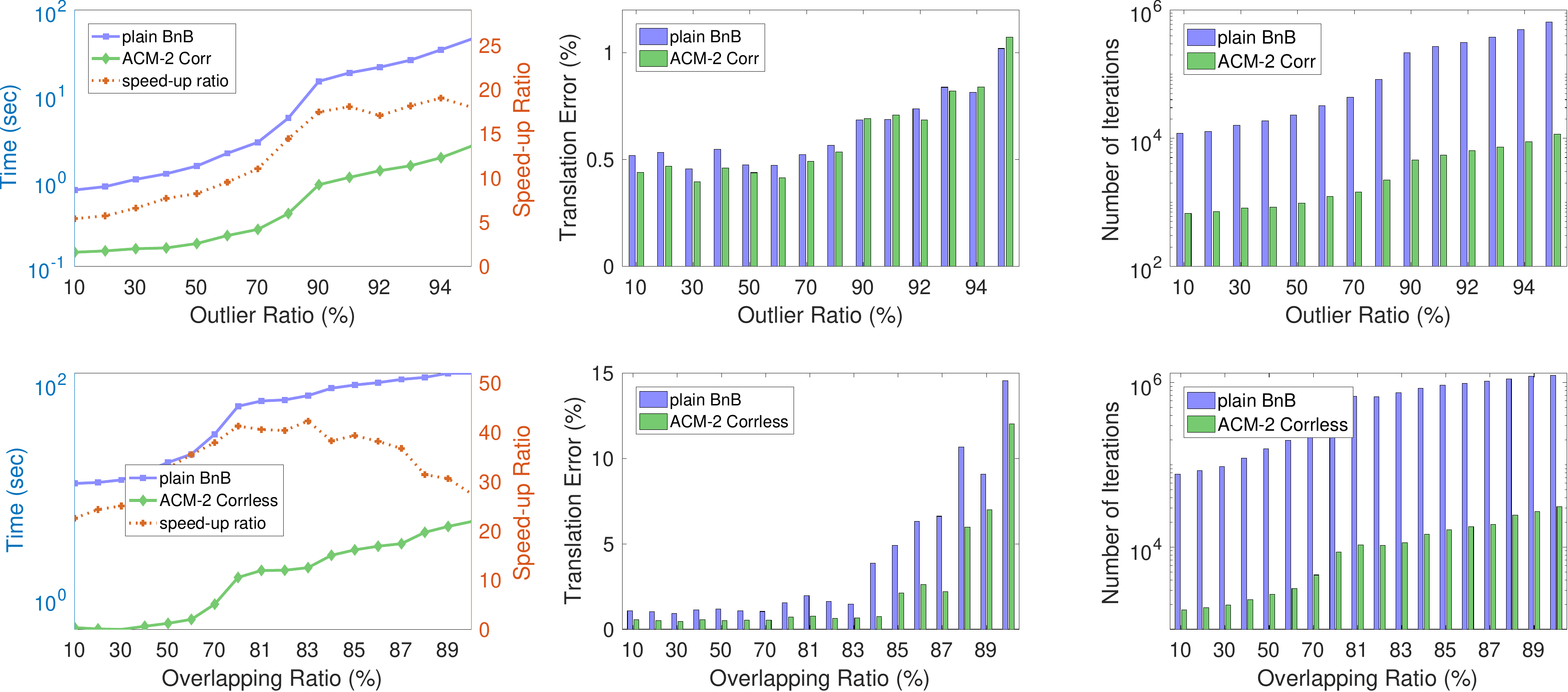}}
       
    \subfloat[Running Time\label{fig:3DCorrlesstime}]{%
       \includegraphics[height=.2\linewidth]{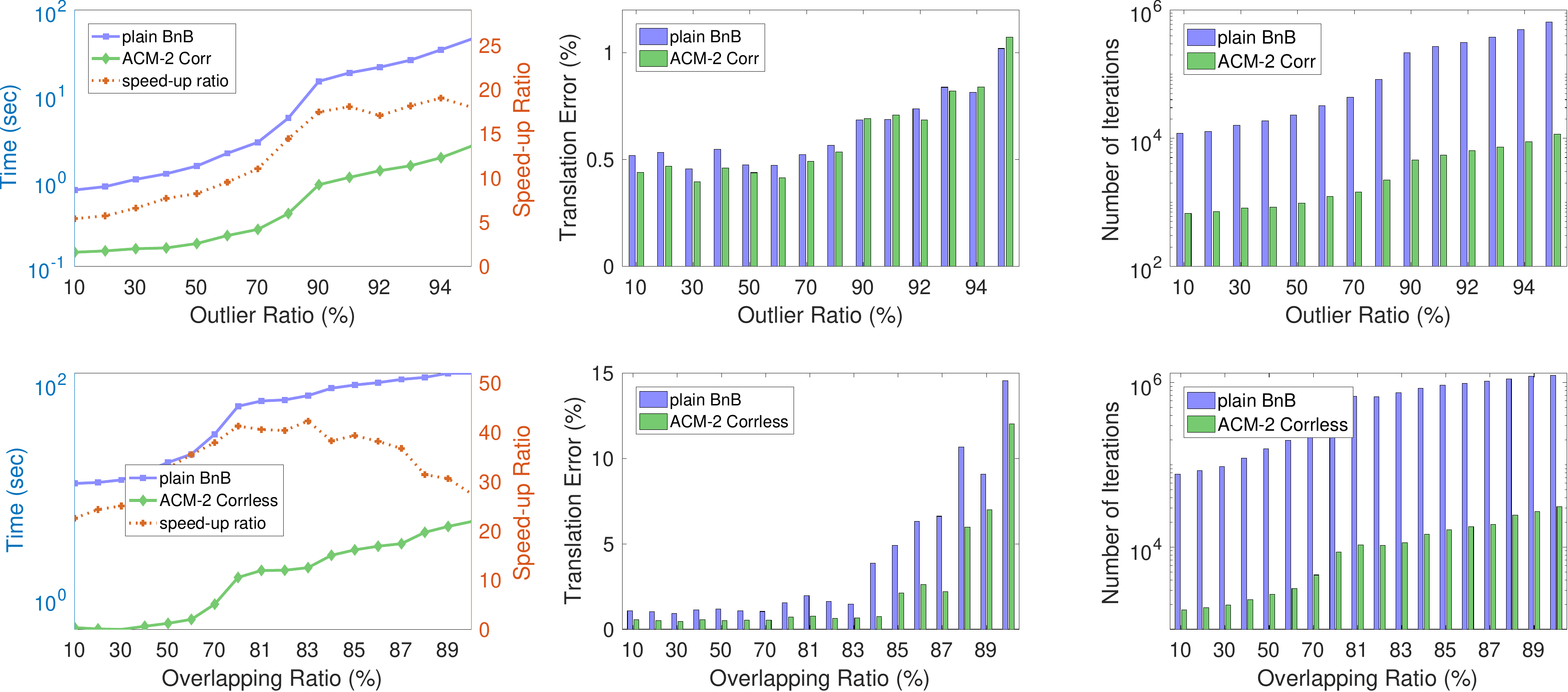}}
       \hfill
    \subfloat[Translation Error\label{fig:3DCorrlesserror}]{%
       \includegraphics[height=.2\linewidth]{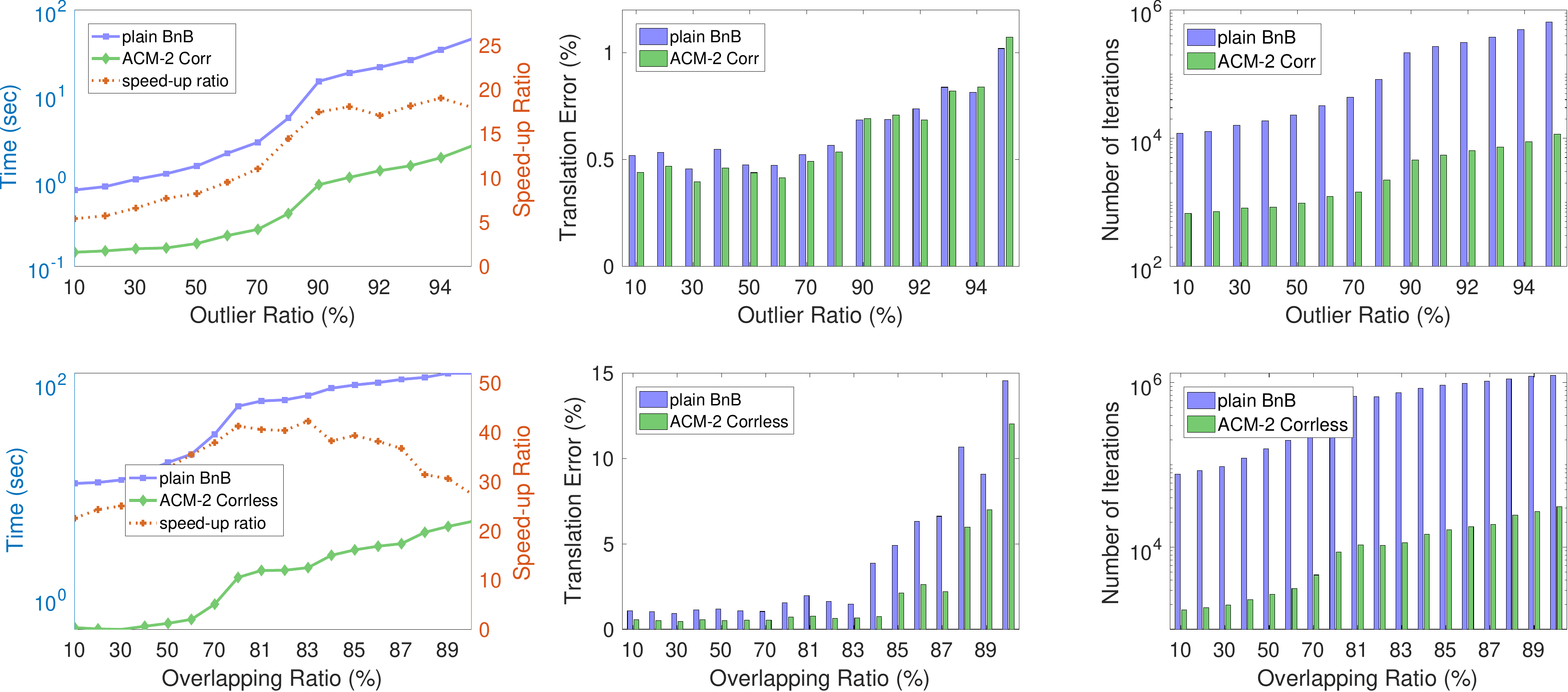}}
       \hfill
    \subfloat[Translation Error\label{fig:3DCorrlessiter}]{%
       \includegraphics[height=.2\linewidth]{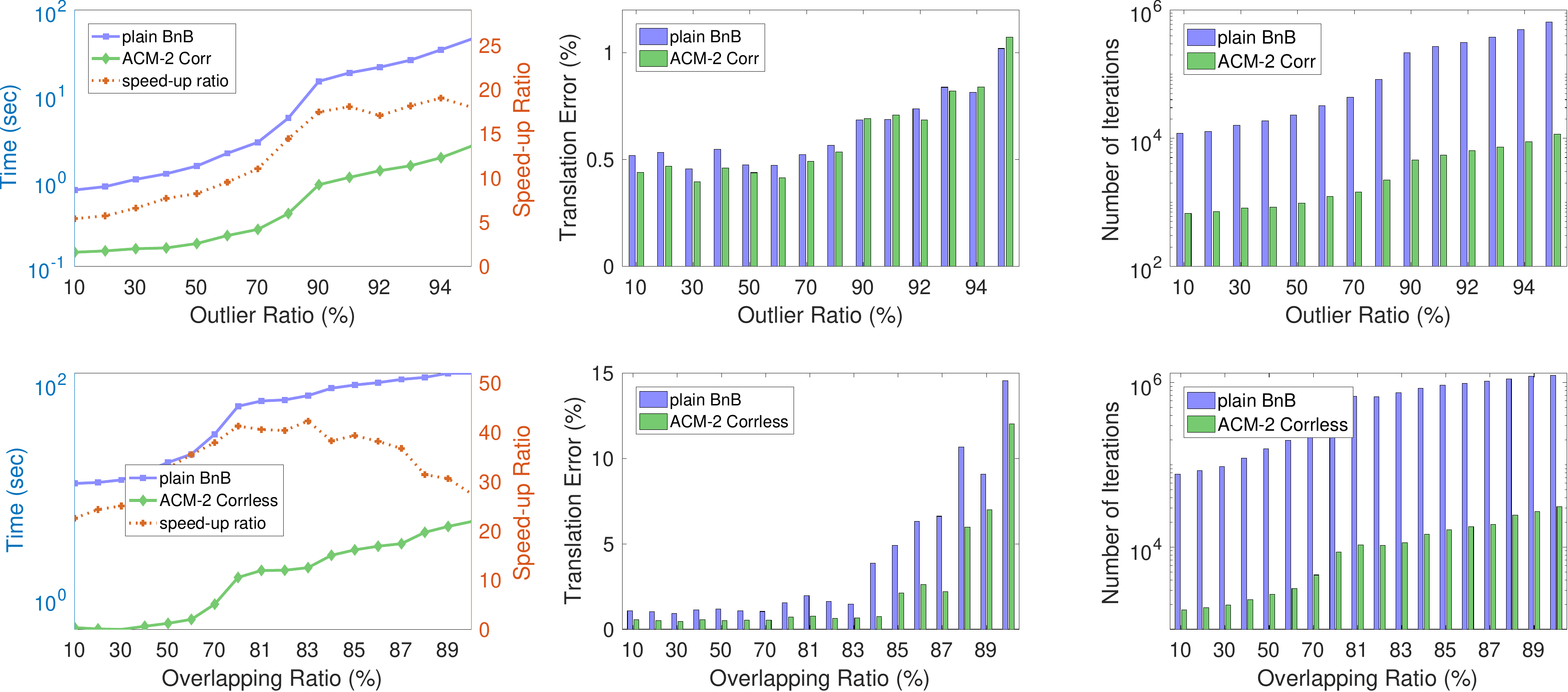}}
    % \caption{3D-3D registration}
    \caption{ 3D-3D registration experiments (Section~\ref{subsection:3DProblem-synthetic-experiments}). The first row shows the correspondence-less setting (Section~\ref{Sec:3DCorrProblem}) and the second row shows the correspondence-based setting (Section~\ref{Sec:3DCorrlessProblem}). ACM-2 Corr is $20 \times$ faster than plain BnB in the high-outlier regime (\figurename~\ref{fig:3DCorrtime}) with comparable translation errors (\figurename~\ref{fig:3DCorrerror}) and converging within much fewer iterations (\figurename~\ref{fig:3DCorriter}). Similar results are shown in \figurename~\ref{fig:3DCorrlesstime}-\figurename~\ref{fig:3DCorrlessiter} for the correspondence-less setting, where ACM-2 Correless attains  $30 \times$ - $40 \times$ speed-up. ($1000$ randomly generated correspondences, $100$ trials) }
    \label{fig:3Dsimulation}
\end{figure*}

\subsection{Synthetic Experiments} \label{subsection:3DProblem-synthetic-experiments}
\noindent\textbf{Setup.} 
The synthetic data generation for 3D-3D registration is similar to that in Section~\ref{sec:3d2dsynth}, except that we do not transform them into 2D features here. 
To introduce outliers, we randomly generate 3D points within the same range, transform them into one of the coordinate systems, and use them to replace the original inlier points.
The relative translation and rotation axis are randomly generated in the unit cube. The rotation angle is randomly sampled in the interval $[-\pi, \pi]$.
For correspondence-based registration, we first generate 3D-3D correspondences and then add noise from a Gaussian distribution with zero mean and 0.01 variance to the 3D points in both coordinate systems. 
For correspondence-less registration, we generate two noiseless point sets of the same size.

In both settings, we set the threshold $\epsilon = 0.001$ and matching threshold for correspondence-based setting $\tau = 0.1\epsilon$, the number of correspondences/points in each set to $1000$, and we run $100$ trials to get stable results. The maximal depth of plain BnB and ACM-2 Corr is set to $10$. 
As suggested in \cite{liu2018efficient}, we keep $1000$ point pairs with the largest inter-point distance in each point set for practical implementation in the correspondence-less setting.
To measure translation errors, we define the relative translation error as $\frac{\|\hat{\mathbf{t}} -\mathbf{t}_{gt}\|_2}{\| \mathbf{t}_{gt} \|_2}$, where $\mathbf{t}_{gt}$ is the ground truth translation and $\hat{\mathbf{t}}$ is the estimated translation.

\noindent\textbf{Results.}
For the correspondence-based setting, the outlier ratio is varied from $10\%$ to $90\%$ in steps of $10\%$, and then from $91\%$ to $95\%$ in steps of $1\%$. The mean results are displayed in \figurename~\ref{fig:3Dsimulation}. 
For the correspondence-less setting, the overlapping ratio is varied from $10\%$ to $70\%$ in steps of $10\%$, and then from $81\%$ to $90\%$ in steps of $1\%$. 

\figurename~\ref{fig:3DCorrtime} and~\ref{fig:3DCorrlesstime} make it clear that significant improvements have been obtained in both settings.
As shown, ACM-2 yields an order of magnitude faster results than plain BnB and the number of iterations of ACM-2 is about $2$ orders of magnitude smaller than that of plain BnB. Similar errors are observed in all cases.
When dealing with more extreme outlier ratios (larger than $90\%$) or extreme overlapping ratios (larger than $80\%$), ACM-2 Corr and ACM-2 Corrless are around $18 \times$ and $40\times$ faster than plain BnB, respectively. These observations are in line with the previously discussed advantage of ACM that the less $1$-dimensional search space helps ACM to converge much faster.

\subsection{Real Experiments} \label{subsection:3DProblem-real-experiments}
Here we use the Stanford \textit{bunny} point cloud \cite{curless1996volumetric} to compare ACM-2 Corrless and plain BnB.

\noindent\textbf{Setup.} Following ~\cite{liu2018efficient}, we use the \textit{pcdownsample}~\cite{pomerleau2013comparing} function from MATLAB to down-sample the original point cloud. To generate point sets, we then randomly cut the original bunny according to a certain overlap ratio and thereby obtain a fragment of the original point set. To conclude, we randomly transform this small fragment. The translation is randomly generated in $[-1,1]^3$ and the rotation angles are randomly sampled from $[-\pi, \pi]$. 
\figurename~\ref{fig:bunny1} shows an example experiment setup with inliers and outliers as identified through the CM.

To get RI pairs, we follow the same strategy as in~\cite{liu2018efficient}.
We set the threshold to $10^{-4}$ and the maximal depth to $10$. The initial cube of plain BnB is set to $[-1,1]^3$ and $[-1,1]^2$ for ACM-2 Corrless.

\noindent\textbf{Results.}
\figurename~\ref{fig:BunnyResults} shows a comparison between plain BnB and ACM-2 Corrless, where we vary the overlap ratio from $10\%$ to $90\%$ (cf. first row), and then from $1\%$ to $9\%$ (cf. second row) to check extreme cases.
ACM-2 Corrless presents a significant overall speed-up with respect to plain BnB while the translation errors remain comparable. To be specific, ACM-2 Corrless converges in less than $0.5$ sec on average, whereas plain BnB needs more than $10$ sec. The speed-up factor for ACM-2 Corrless therefore is about $40 \times$.
Interestingly, both methods can provide accurate estimation even in extreme cases where ACM-2 Corrless is more stable than plain BnB. 
As can be identified in \figurename~\ref{fig:Bunnyerror1}, both methods can work in the majority of cases down to overlap ratios as low as $4\%$.

\begin{figure}[h]
    \centering
    \subfloat[\small Small Ratio Running Time\label{fig:Bunnytime1}]{%
       \includegraphics[height=.36\linewidth]{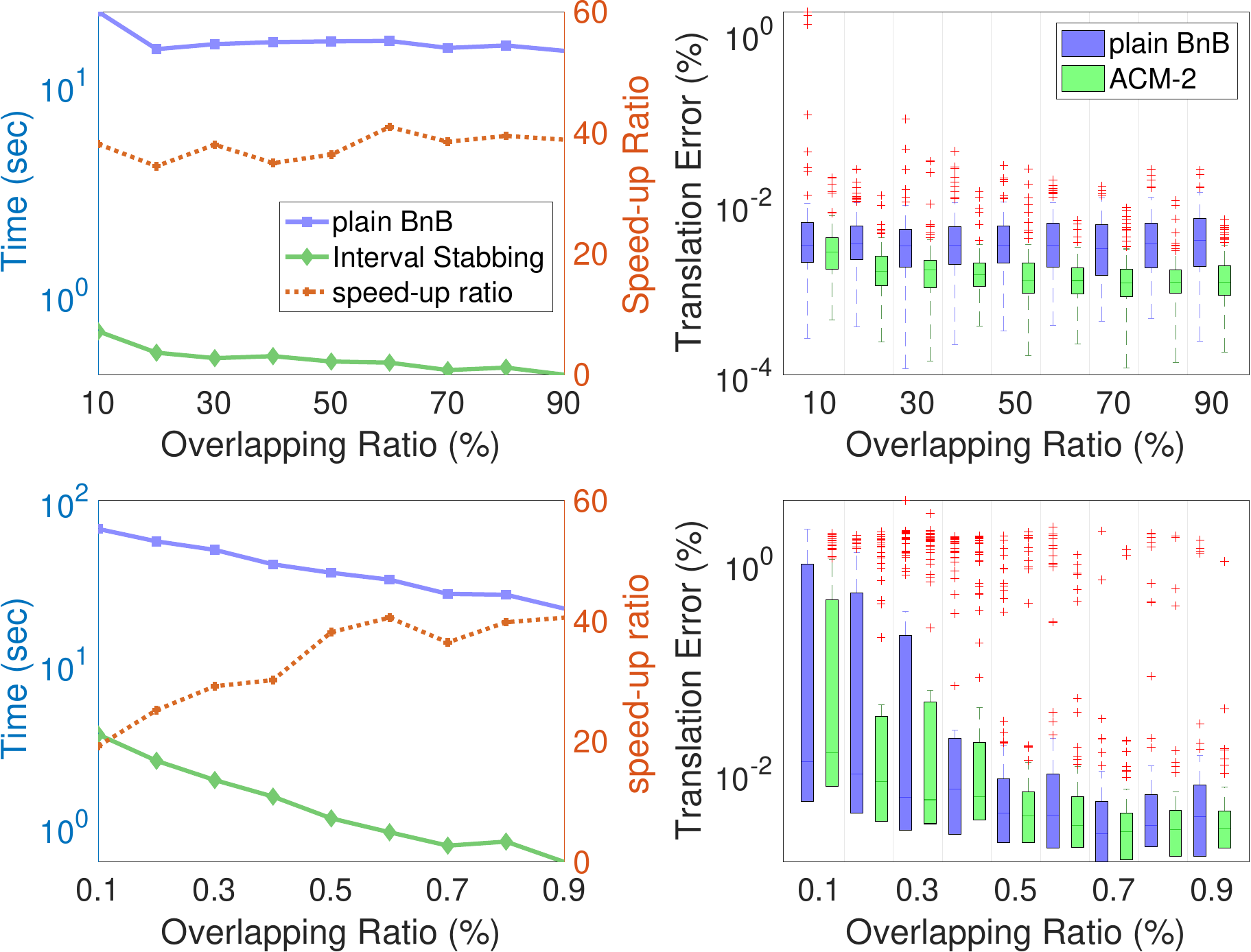}}
       \hfill
    \subfloat[\small Small Ratio Error\label{fig:Bunnyerror1}]{%
       \includegraphics[height=.36\linewidth]{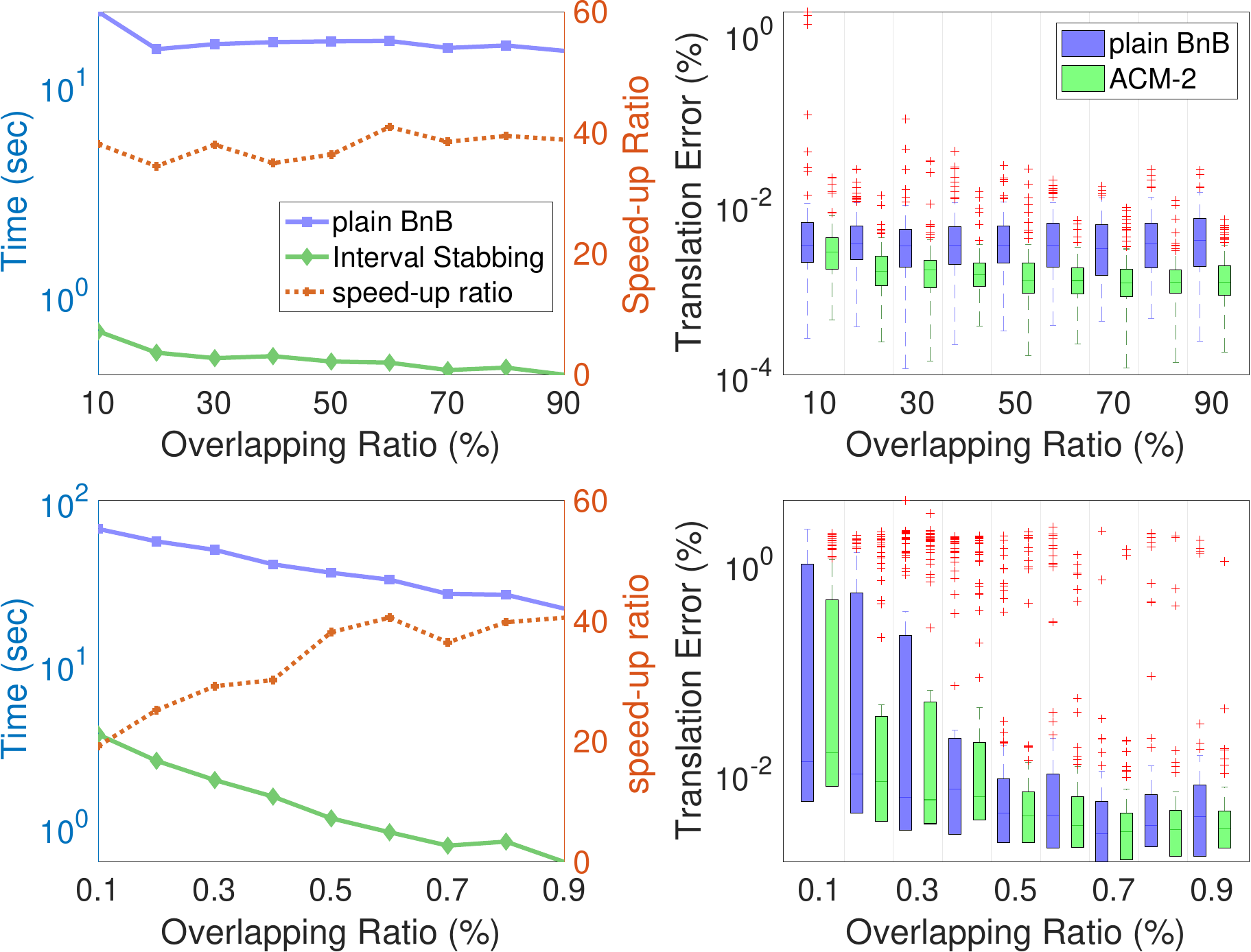}}
       
    \subfloat[{\small Large Ratio Running Time}\label{fig:Bunnytime2}]{%CoverLetter-MinorRevision.tex CoverLetter.tex
       \includegraphics[height=.36\linewidth]{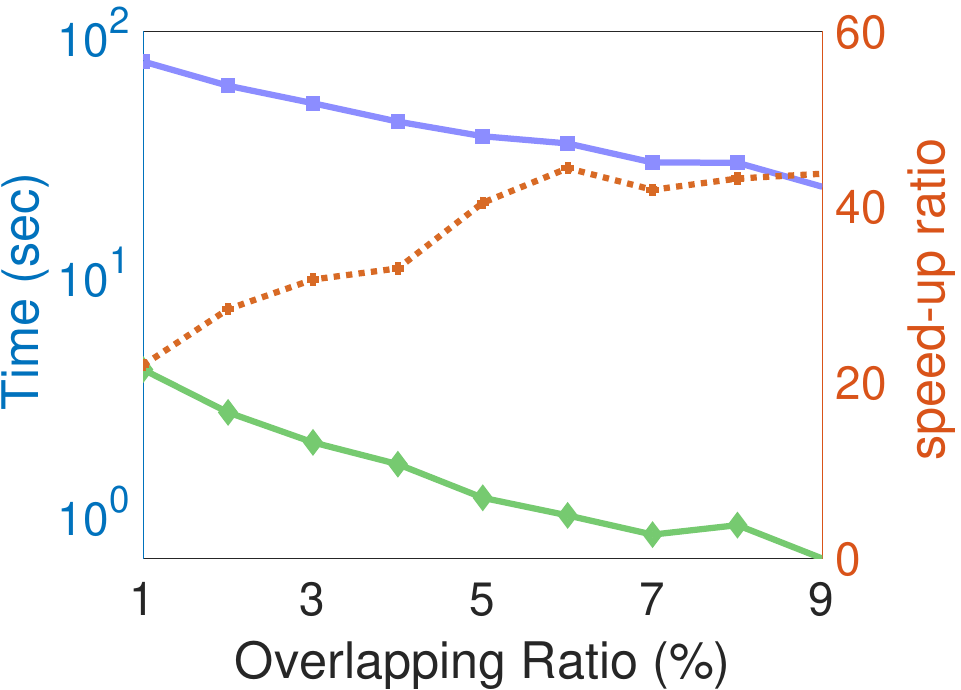}}
       \hfill
    \subfloat[{\small Large Ratio Error}\label{fig:Bunnyerror2}]{%
       \includegraphics[height=.36\linewidth]{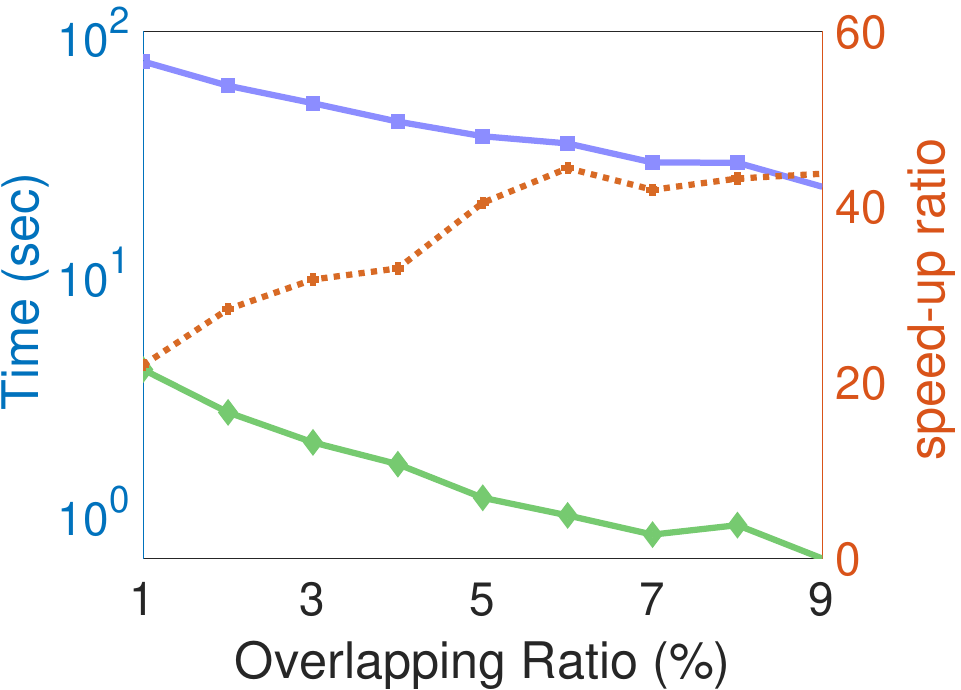}}

   \subfloat[Number of Iterations\label{fig:Bunnyiter}]{%
       \includegraphics[height=.37\linewidth]{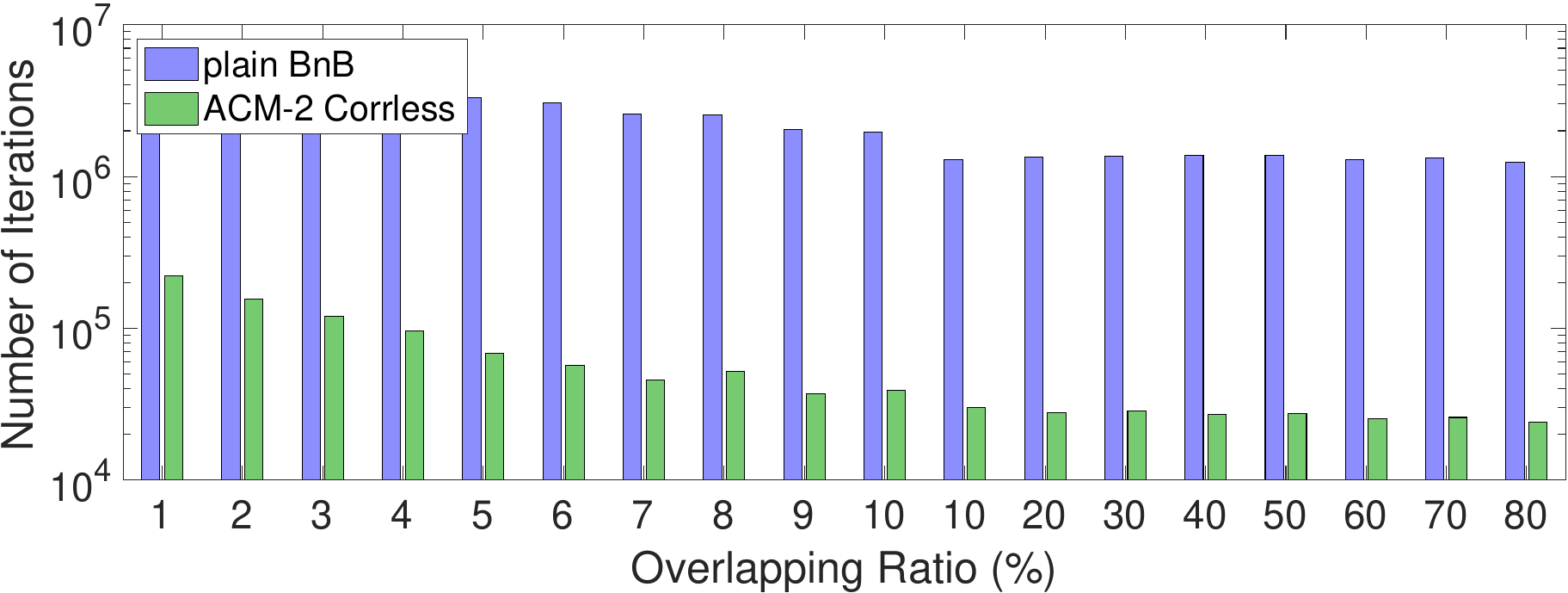}}
    \caption{  3D-3D correspondence-less registration experiments on Stanford Bunny (Section~\ref{subsection:3DProblem-real-experiments}). ACM-2 Corr is $30 \times$ - $40 \times$ faster than plain BnB (\figurename~\ref{fig:Bunnytime1}~\ref{fig:Bunnytime2}) with comparable translation errors (\figurename~\ref{fig:Bunnyerror1}~\ref{fig:Bunnyerror2}) and converging within much fewer iterations (\figurename~\ref{fig:Bunnyiter}). (Mean results on $100$ trials) }
    \label{fig:BunnyResults}
\end{figure}
\begin{remark}
    Note that the main contribution of this work is to an acceleration of consensus maximization, which we test on the noiseless, correspondence-less scenario addressed in \cite{liu2018efficient}. Though this already serves well to highlight the advantages of our method, we nonetheless want to refer the reader to Zhang et al.~\cite{zhang2021self} who introduce an extension of the approach to handle noise with the help of an initial state estimation.
\end{remark}

\section{ACM-3: Rotation and Focal Length Estimation} \label{sec:4D}
In this section, we provide a $4$-dimensional application to further demonstrate the performance of ACM.
Consider a camera rotating about its optical centre (e.g., panoramic stitching); furthermore, the focal length is unique but unknown in both views. A $4$D problem then naturally arises as we need to find a 3D rotation and an unknown focal length parameter. Such $4$D problem was studied in \cite{brown2007minimal} and can be solved by RANSAC. In \cite{bazin2014globally}, Bazin et al. proposed a plain BnB method to solve the $4$D problem globally optimally.

\noindent \textbf{Problem Formulation.} Denote by $( \mathbf{x}_i, \mathbf{x}'_i )$ a pair of centered $2D$ image points and by $(\tilde{\mathbf{x}}_i,\tilde{\mathbf{x}}'_i)$ the corresponding homogenized coordinates as in Sec.~\ref{Sec:2DProblem}. When a camera rotates around its optical centre, there is no translation involved, and the unknown variables consist of a 3D rotation $\mathbf{R}\in SO(3)$ and a calibration matrix $\mathbf{K}\in \mathbb{R}^{3\times 3}$. If $(\tilde{\mathbf{x}}_i,\tilde{\mathbf{x}}'_i)$ is a noiseless inlier pair, they are related by a homography at infinity as per

\begin{equation}
    \tilde{\mathbf{x}}' \sim \mathbf{K} \mathbf{R} \mathbf{K}^{-1} \tilde{\mathbf{x}}.
\end{equation}
Note that since the image points are centred,  $\mathbf{K}$ can be written in the form of $diag([f,f,1])$, where $f$ denotes the unknown focal length. As a result, the consensus maximization problem can be formulated as 
\begin{equation}
    \begin{aligned}
        \max_{\mathcal{I},\mathbf{R},f} &\; |\mathcal{I}| \\
        s.t. &\; \|\mathbf{x}'_i-\mathbf{\pi}(\mathbf{K}\mathbf{R}\mathbf{K}^{-1}\tilde{\mathbf{x}}_i)\|\leq \epsilon,
        \forall ( \mathbf{x}_i, \mathbf{x}'_i ) \in \mathcal{I}. \label{Problem: 4D}
    \end{aligned}
\end{equation}
Recall that, similarly to Sec.~\ref{Sec:1DProblem}, the rotation $\mathbf{R}$ can be parameterised by $3$ unknown angles as $\mathbf{R}_z(\theta) \mathbf{R}_y(\alpha) \mathbf{R}_z(\phi)$. Hence, one can search the $4$D space of $\theta,\alpha,\phi$ and $f$ and solve this consensus maximization problem by plain BnB globally optimally. We refer the reader to Bazin et al. \cite{bazin2014globally} for details of this BnB method, where an appropriate hyperbola was designed to develop upper bounds.

We build upon their work \cite{bazin2014globally} and develop ACM-3, an accelerated consensus maximization method that branches over the 3D space of $\alpha, \phi, f$ for finding an optimal solution. As usual, the remaining degree of freedom $\theta$ is determined using Interval Stabbing. While we need to again derive tailored upper and lower bounds for ACM-3, the derivation is relatively similar to that in Secs.~\ref{Sec:1DProblem} and~\ref{Sec:2DProblem}. Therefore, for the sake of brevity, we omit the derivation here.

\begin{figure*}
    \centering
    \subfloat[\scalebox{.90}{Running Time}\label{fig:4Dtime}]{%
       \includegraphics[height=.17\linewidth]{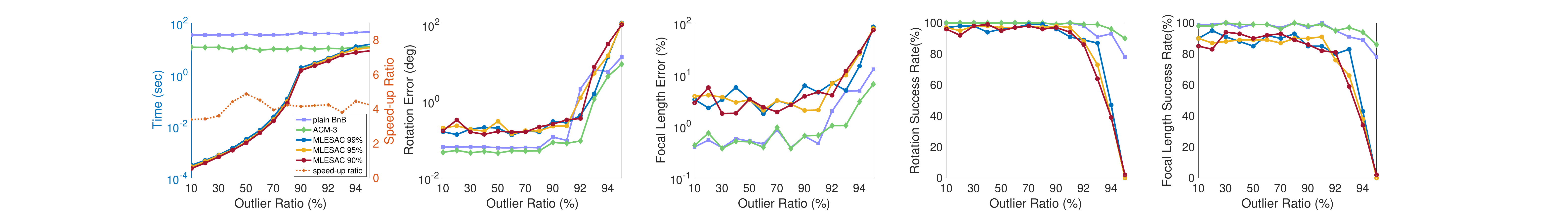}}
       % \hfill
    \subfloat[\scalebox{.90}{Rotation Error}\label{fig:4DerrorR}]{%
       \includegraphics[height=.17\linewidth]{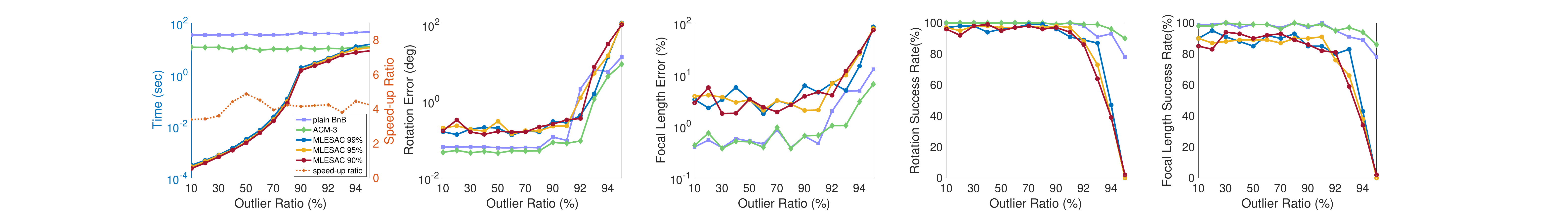}}
       % \hfill
    \subfloat[\scalebox{.90}{Focal Length Error}\label{fig:4Derrorf}]{%
       \includegraphics[height=.17\linewidth]{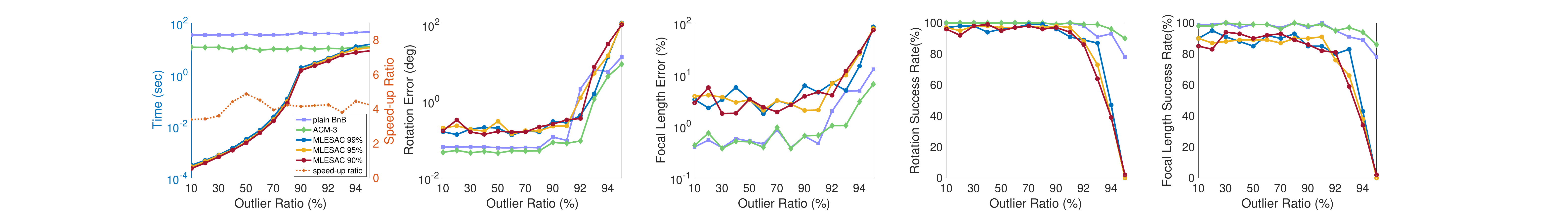}}
       % \hfill
    \subfloat[\scalebox{.90}{Rotation Success Rate}\label{fig:4DsuccessR}\\
    \tiny{Rotation Error $\leq 1$deg}]{%
       \includegraphics[height=.17\linewidth]{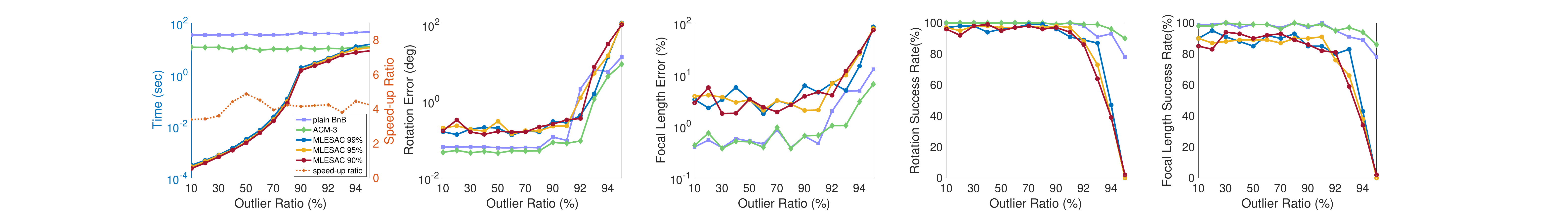}}
       % \hfill
    \subfloat[\scalebox{.90}{Focal Success Rate}\label{fig:4Dsuccessf}\\
    \tiny{Focal Length Error $\leq 5\%$}]{%
       \includegraphics[height=.17\linewidth]{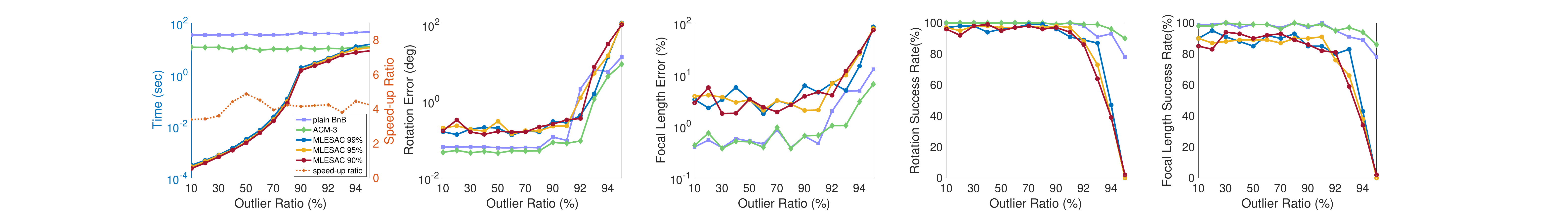}}
    \caption{Rotation and focal length estimation experiments (Section \ref{sec:4D}). ACM-3 archives $4_\times$ speed-up over plain BnB (\figurename~\ref{fig:4Dtime}). In the cases of large outlier ratios, MLESAC had a hard time meeting the confidence criterion. ACM-3 performs similar rotation (\figurename~\ref{fig:4DerrorR}) and focal length errors (\figurename~\ref{fig:4Derrorf}) with plain BnB. Furthermore, as a BnB-based method, ACM-3 also works well in large outlier ratios where MLESAC tend to fail (\figurename~\ref{fig:4DsuccessR}-\figurename~\ref{fig:4Dsuccessf}). ($100$ randomly generated noisy correspondences, $100$ trials)}
    \label{fig:4Dsimulation}
\end{figure*}

\subsection{Synthetic Experiments}
\noindent \textbf{Setup.} We compare plain BnB and ACM-3 on the synthetic data, and we furthermore include MLESAC into comparison as it is believed to be more robust to noise than vanilla RANSAC (the performance of RANSAC in our experiments is comparable to MLESAC and not shown here). We terminate MLESAC if it reaches a certain degree of confidence, say $90\%,95\%, 99\%$.

To generate synthetic data, we first randomly sample $100$ $3$D points having a distance between $4$ and $8$ to the world frame origin and then project them onto the camera frame. We uniformly sample the focal length in $[200,1500]$ and correspondingly set the canvas size to be $1480 \times 2160$ pixels. Rotation angles are uniformly sampled from $[-\pi/2, \pi/2]$. Pixel noise, randomly sampled from Gaussian distribution $\mathcal{N}(0, 0.5)$, is added to both coordinates of the image points.  

We set the inlier threshold to be $1$ pixel. The rotation error is measured as $\arccos((tr(\mathbf{R}_{gt}^T\hat{\mathbf{R}})-1)/2)$, where $\mathbf{R}_{gt}$ and $\hat{\mathbf{R}}$ are the ground truth and the estimated rotation matrices, respectively. The focal length error is defined as the absolute difference between the estimated and the ground truth focal length, divided by the ground truth focal length.

\noindent \textbf{Results.}  \figurename~\ref{fig:4Dsimulation} presents the comparison results over running time and estimation errors. It can be observed that ACM-3 is consistently faster than plain BnB with roughly a $4\times$ speed-up and provides comparably low errors as plain BnB in rotation and focal length. To be specific, when the outlier ratio is $90\%$ plain BnB needs about $41.33$ sec to converge while ACM-3 only needs $10.78$ sec on average.

As expected, ACM-3 as a BnB-based method still works well even for outlier ratios above over $90\%$ while MLESAC starts to fail (\figurename~\ref{fig:4DerrorR} and \figurename~\ref{fig:4Derrorf}). To further compare ACM-3 and MLESAC, we report their \textit{success rates} (a trial is a success if the rotation error is less or equal to $1$deg and focal length error is less or equal to $5\%$). Note that even in the extreme case where the outlier ratio is $94\%$, ACM-3, as well as plain BnB, provide less than $1$ deg error in over $80\%$ of the trials. Meanwhile, MLESAC can only succeed in around $40\%$ of the trials (\figurename~\ref{fig:4DsuccessR}). Similar observations can be made in \figurename~\ref{fig:4Dsuccessf}.
It is therefore intuitively clear that for such high outlier ratio cases when MLESAC meets the confidence the robust solution is barely valid. Note that the confidence level is set to 99\% as this leads to an execution time comparable to our algorithm for such extreme outlier scenarios. In summary, our methods ultimately outperform MLESAC in challenging scenarios, both in terms of accuracy and in terms of computational efficiency.

\section{Conclusion}\label{Sec:Conclusion}
%Todo: mention, often leading to 10x or more gain in computational efficiency

In this work, we have introduced a general strategy to speed-up the solution of globally optimal consensus maximization via branch-and-bound. The proposed strategy consists of---for an $n$-dimensional problem---simply branching over an $(n-1)$-dimensional space, and then solving for the remaining variable globally optimally using interval stabbing. Though the latter step takes on $\mathcal{O}(\log M)$ complexity, its cost is typically compensated for by the fact that much fewer volumes are created by branching over a smaller-dimensional space. Furthermore, embedding the globally optimal interval stabbing mechanism typically leads to tighter lower bounds, thereby causing earlier pruning of sub-optimal branches. We have validated our approach on four fundamental geometric registration problems, including camera resectioning, relative camera pose estimation,  point set registration, and rotation and focal length estimation. In the case of point set registration, we have further extended the application to the correspondence-less case by running the optimization over exhaustive match lists. This variant of the correspondence-less registration scenario is successfully solved owing to the algorithm's strong ability to handle extreme outlier ratios. The attained execution times often hind at potential use in real-time applications, thereby lifting the relevance of globally optimal consensus maximization from a pure validation tool to a viable solution for online processing.

% if have a single appendix:
%\appendix[Proof of the Zonklar Equations]
% or
%\appendix  % for no appendix heading
% do not use \section anymore after \appendix, only \section*
% is possibly needed

% use appendices with more than one appendix
% then use \section to start each appendix
% you must declare a \section before using any
% \subsection or using \label (\appendices by itself
% starts a section numbered zero.)
%

%The authors would like to thank...

% Can use something like this to put references on a page
% by themselves when using endfloat and the captionsoff option.
\ifCLASSOPTIONcaptionsoff
  \newpage
\fi

% trigger a \newpage just before the given reference
% number - used to balance the columns on the last page
% adjust value as needed - may need to be readjusted if
% the document is modified later
%\IEEEtriggeratref{8}
% The "triggered" command can be changed if desired:
%\IEEEtriggercmd{\enlargethispage{-5in}}

% references section

% can use a bibliography generated by BibTeX as a .bbl file
% BibTeX documentation can be easily obtained at:
% http://mirror.ctan.org/biblio/bibtex/contrib/doc/
% The IEEEtran BibTeX style support page is at:
% http://www.michaelshell.org/tex/ieeetran/bibtex/
\bibliographystyle{IEEEtran}
% argument is your BibTeX string definitions and bibliography database(s)
\bibliography{references}
%
% <OR> manually copy in the resultant .bbl file
% set second argument of \begin to the number of references
% (used to reserve space for the reference number labels box)
% \begin{thebibliography}{1}

% \bibitem{IEEEhowto:kopka}
% H.~Kopka and P.~W. Daly, \emph{A Guide to \LaTeX}, 3rd~ed.\hskip 1em plus
%   0.5em minus 0.4em\relax Harlow, England: Addison-Wesley, 1999.

% \end{thebibliography}

% % biography section
% % 
% % If you have an EPS/PDF photo (graphicx package needed) extra braces are
% % needed around the contents of the optional argument to biography to prevent
% % the LaTeX parser from getting confused when it sees the complicated
% % \includegraphics command within an optional argument. (You could create
% % your own custom macro containing the \includegraphics command to make things
% % simpler here.)
% %\begin{IEEEbiography}[{\includegraphics[width=1in,height=1.25in,clip,keepaspectratio]{mshell}}]{Michael Shell}
% % or if you just want to reserve a space for a photo:

% \begin{IEEEbiography}{Michael Shell}
% Biography text here.
% \end{IEEEbiography}

% % if you will not have a photo at all:
% \begin{IEEEbiographynophoto}{John Doe}
% Biography text here.
% \end{IEEEbiographynophoto}

% % insert where needed to balance the two columns on the last page with
% % biographies
% %\newpage

% \begin{IEEEbiographynophoto}{Jane Doe}
% Biography text here.
% \end{IEEEbiographynophoto}

% You can push biographies down or up by placing
% a \vfill before or after them. The appropriate
% use of \vfill depends on what kind of text is
% on the last page and whether or not the columns
% are being equalized.

\vfill
% that's all folks

\end{document}